\def\eqref#1{equation~\ref{#1}}
\def\1{\bm{1}}
\DeclareMathAlphabet{\mathsfit}{\encodingdefault}{\sfdefault}{m}{sl}
\SetMathAlphabet{\mathsfit}{bold}{\encodingdefault}{\sfdefault}{bx}{n}
\newcommand{\KL}{D_{\mathrm{KL}}}
\DeclareMathOperator*{\argmin}{arg\,min}
\newtcolorbox{algbox}[1][]{
    colback=blue!5!white,
    colframe=blue!75!black,
    sharp corners,
    boxrule=0.5pt,
    #1
}
\newtheorem{proposition}{Proposition}
\newtheorem{lemma}{Lemma}
\newtheorem{assumption}{Assumption}
\title{Learning without Global Backpropagation via Synergistic Information Distillation}
\author{
  Chenhao Ye$^{1,2}$ \quad
  Ming Tang$^{1}$\thanks{Corresponding author. Email: \texttt{tangm3@sustech.edu.cn} } \\
  \\
  $^{1}$Department of Computer Science and Engineering, Southern University of Science and Technology \\
  $^{2}$School of Internet of Things, Nanjing University of Posts and Telecommunications
}
\begin{document}
\maketitle

\begin{abstract}
Backpropagation (BP), while foundational to deep learning, imposes two critical scalability bottlenecks: \textbf{update locking}, where network modules remain idle until the entire backward pass completes, and \textbf{high memory consumption} due to storing activations for gradient computation. To address these limitations, we introduce \emph{Synergistic Information Distillation} (SID), a novel training framework that reframes deep learning as a cascade of local cooperative refinement problems. In SID, a deep network is structured as a pipeline of modules, each imposed with a local objective to refine a probabilistic ``belief'' about the ground-truth target. This objective balances fidelity to the target with consistency to the belief from its preceding module. By decoupling the backward dependencies between modules, SID enables parallel training and hence eliminates update locking and drastically reduces memory requirements. Meanwhile, this design preserves the standard feed-forward inference pass, making SID a versatile drop-in replacement for BP. We provide a theoretical foundation, proving that SID guarantees monotonic performance improvement with network depth. Empirically, SID consistently matches or surpasses the classification accuracy of BP, exhibiting superior scalability and pronounced robustness to label noise. The code is publicly available at: \url{https://github.com/ychAlbert/sid_bp}.
\end{abstract}

\section{Introduction}
The backpropagation algorithm (BP) is the cornerstone of deep learning, enabling efficient credit assignment through gradient-based optimization. Its mechanism relies on the chain rule to compute gradients via a sequential end-to-end traversal of the network's computation graph: a forward pass to compute activations and the final loss, followed by a backward pass to propagate the loss gradient from the output back to the input. While remarkably effective, this architectural design imposes two fundamental scalability bottlenecks:
\begin{itemize}[leftmargin=*, nosep]
    \item \textbf{Update Locking:} The sequential nature of the backward pass creates a global dependency. Parameters in a given layer cannot be updated until all subsequent layers have completed their backward computation. This locks the entire network during the gradient calculation, preventing parallel updates across layers and leading to significant computational idling.
    \item \textbf{High Memory Consumption:} To compute the local gradients at each layer, the corresponding activations from the forward pass must be stored in memory. For deep networks, the aggregate memory required to store these activations for the entire graph becomes a primary constraint, limiting model depth and training batch size.
\end{itemize}

These challenges have motivated a rich body of research into backpropagation-free alternatives. However, existing approaches often introduce their own significant trade-offs. \emph{Target Propagation} methods \citep{lee2015difference, ernoult2022towards} require complex auxiliary networks to learn approximate inverses, which can be difficult to train and unstable. \emph{Equilibrium-based models} \citep{scellier2017equilibrium} replace the standard forward pass with iterative dynamics to reach a fixed point, fundamentally altering the inference process and its computational cost. While other methods like \emph{Feedback Alignment} \citep{nokland2016direct} address related issues such as biological plausibility, they do not resolve the core bottlenecks of update locking or memory overhead \citep{enhanced_ff_iclr2025, petra_iclr2025, spela_2025, li2025noproptrainingneuralnetworks, scff_natcom_2025, contrastive_ff_2025, unifying_bp_ff_iclr2025, reshaping_ff_2025, baffle_eccv2024}.

In this paper, we introduce \emph{Synergistic Information Distillation} (SID), a new paradigm that resolves BP's core limitations without the aforementioned trade-offs. Instead of approximating global gradients, SID reframes learning as a cooperative sequential refinement of a probabilistic belief. We view a network as a pipeline of modules, where each module $f_i$ receives a belief distribution $p_{i-1}$ from its predecessor and outputs a refined belief $p_i$. To achieve this, each module is imposed with a simple local objective composed of two terms: a distillation term that pulls the belief towards the ground-truth label and a consistency term that regularizes the refinement step, ensuring that the belief of the given module does not deviate drastically from the belief of the previous module.

The key to SID's efficacy is its mechanism for decoupling modules. By applying a stop-gradient operation to the consistency term, we decouple all backward dependencies between modules. This simple yet powerful design directly resolves the core limitations of BP:
\begin{enumerate}[nosep]
    \item \textbf{Update Locking is Eliminated.} With no inter-module dependencies, all modules can compute their gradients and be updated in parallel after a single, gradient-free forward pass that generates ``teacher" beliefs.
    \item \textbf{Memory Overhead is Reduced.} Since gradients are local to each module, only the activations within a single module need to be stored at any given time, leading to a memory footprint that scales with the size of the largest module, not the entire network depth.
\end{enumerate}
Meanwhile, this design preserves a standard feed-forward architecture for inference, making it a seamless replacement for BP during training. Our contributions are summarized as follows:
\begin{enumerate}[nosep]
  \item We introduce SID, a versatile and scalable backpropagation-free framework that resolves update locking and reduces memory overhead through a novel local belief refinement mechanism.
  \item We provide a solid theoretical foundation, proving that under well-defined conditions, SID guarantees monotonic performance improvement as network depth increases.
  \item We conduct a comprehensive empirical evaluation, showing that SID matches or exceeds BP’s performance, particularly showcasing enhanced performance in deeper network architectures and settings with high label noise.
\end{enumerate}
\section{Related Work}
We position SID relative to existing paradigms that seek to overcome the limitations of backpropagation.

\textbf{Approximating the Backward Pass.}
A significant body of work has shown that the strict weight symmetry of BP is not essential for learning. Feedback Alignment (FA) and Direct Feedback Alignment (DFA) \citep{lillicrap2014random, nokland2016direct} use fixed random matrices to convey error signals, while Synthetic Gradients / Decoupled Neural Interfaces (DNI) \citep{jaderberg2017decoupled} train local models to predict gradients from higher layers. These methods successfully break weight symmetry and can alleviate update locking.
\emph{In contrast to SID}, these methods still aim to approximate the global gradient signal. SID takes a different approach by replacing the single global objective with a set of cooperative local objectives, eliminating the need to transport or approximate gradients between modules altogether.

\textbf{Local Target-Based Learning.}
Target Propagation (TP) and its variants \citep{lee2015difference} offer another approach to layer-wise training. Instead of propagating a scalar error, these methods compute layer-specific ``targets" for activations using learned approximate inverses of each layer's function (e.g., auto-encoders). While effective, this approach hinges on the quality of the learned inverses, which can be unstable, especially for non-invertible or expansive layers. Our framework avoids the need for auxiliary inverse models. The ``target" for each module is dynamically formed by its local objective, which smoothly interpolates between the prior belief and the ground-truth label, providing a more direct and stable learning signal.

\textbf{Alternative Training Dynamics.}
Some methods fundamentally alter the network's computational process. The Forward-Forward algorithm \citep{hinton2022forward, enhanced_ff_iclr2025, scff_natcom_2025, contrastive_ff_2025, unifying_bp_ff_iclr2025} replaces the forward-backward dynamic with two forward passes—one for positive data, one for negative data—and updates weights based on a local goodness metric. Equilibrium Propagation \citep{scellier2017equilibrium} uses energy-based models that converge to a fixed point, with learning signals derived from nudging the system's state. Other approaches such as PETRA \citep{petra_iclr2025} focus on parallelization and memory efficiency, while NoProp \citep{li2025noproptrainingneuralnetworks} eliminates both forward- and back-propagation entirely. A key advantage of SID is that it preserves the standard, single-pass feed-forward architecture during inference. The modifications are confined to the training phase, making SID a non-disruptive and practical alternative for conventional deep learning applications.

\textbf{Knowledge Distillation and Layer-wise Supervision.}
SID can be viewed as a structured form of \emph{internal distillation}. The main idea of knowledge distillation is to use intermediate representations of a ``teacher" to guide a ``student"\citep{hinton2015distilling}. This idea has been extended to layer-wise self-supervision, as seen in Born-Again Networks \citep{furlanello2018bornneuralnetworks}, and more recently in hierarchical self-distillation frameworks \citep{hier_selfdistil_2025, prodistill_icml2025, distilling_domain_neurips2024}. \emph{SID's novel contribution} to this area is the decomposition of the distillation process into a cooperative cascade governed by both a distillation term and a local consistency term. This consistency regularizer is crucial; it ensures that learning is incremental and stable across the network's depth, a property not explicitly enforced by prior distillation or layer-wise supervision schemes.


\section{The SID Framework}
\label{sec:methodology}

In this section, we formally introduce the Synergistic Information Distillation (SID) framework. We first provide a high-level overview of its architecture and the core belief refinement process. We then detail the local objective function that drives learning and conclude with the complete two-phase training algorithm.

\subsection{Framework Overview and Belief Refinement}
\label{sec:framework_overview}

\begin{figure}[t!]
    \centering
    \includegraphics[width=\textwidth]{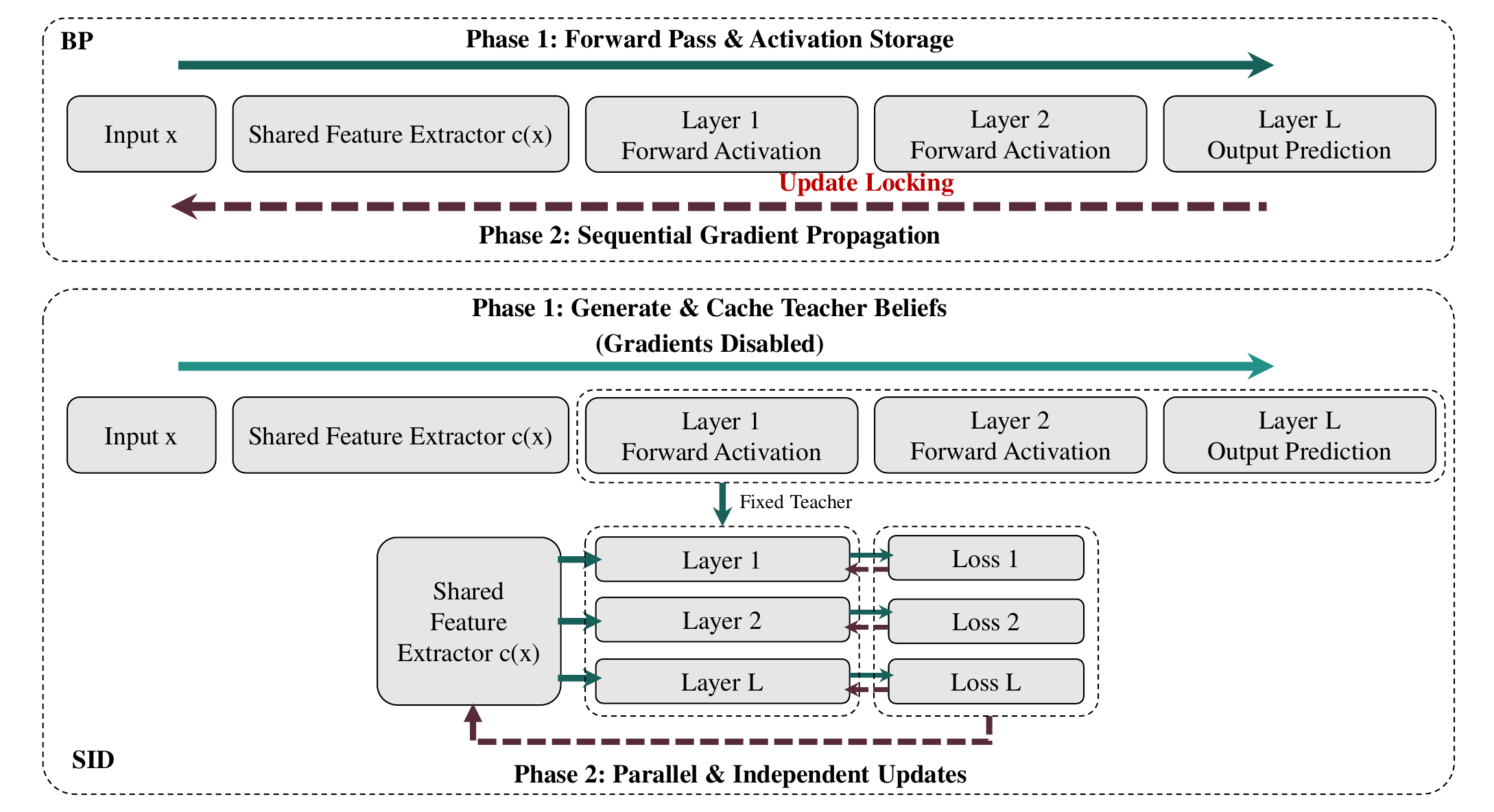}
    \caption{\textbf{Comparison of BP and SID Training Paradigms.} While BP's end-to-end gradient propagation (\textbf{top}) creates a sequential bottleneck, SID (\textbf{bottom}) introduces a two-phase process. \textbf{Phase 1} performs a single gradient-free forward pass to generate a set of fixed ``teacher'' beliefs. \textbf{Phase 2} then uses these local teachers to update all modules in parallel, fundamentally resolving update locking and enabling scalable training.}
    \label{fig:sid_paradigm}
\end{figure}

The central idea of SID is to reframe the end-to-end learning problem as a cascade of localized cooperative learning steps. We consider a deep network composed of a shared feature extractor, denoted by $c(\cdot)$, and a pipeline of $L$ sequential processing modules, $\{f_1, \dots, f_L\}$. The role of this pipeline is to progressively refine a probabilistic distribution over the label set $\mathcal{Y}=\{1,\dots,m\}$, a distribution we term a ``belief.''

For a given input $x \in \mathcal{X}$, the process begins with an initial maximum-entropy belief, $p_0$, which is the uniform distribution over $\mathcal{Y}$. Each subsequent module $f_i$ receives the belief $p_{i-1}$ from its predecessor and the shared features $c(x)$ as input, producing a more informed belief $p_i$:
\begin{equation}
p_i(\cdot) = f_i\big(p_{i-1}(\cdot),\,c(x)\big), \quad \text{for } i=1,\dots,L.
\end{equation}
This sequential refinement allows the network to incrementally build confidence. At inference time, the final belief $p_L$ is used for prediction via $\arg\max_k p_L(k)$. The key challenge, which we address next, is to design a local training signal that encourages this cooperative behavior without requiring global coordination through backpropagation.

\subsection{The Local SID Objective}
\label{sec:local_objective}

To enable decoupled training, we define a local objective function $\mathcal{L}_i$ for each module $f_i$. This objective guides the module to produce a more accurate belief $p_i$ using only locally available information: its input belief $p_{i-1}$ and the ground-truth target distribution $p_y$. We formulate this as a weighted combination of two Kullback-Leibler (KL) divergences:
\begin{equation}\label{eq:local_loss}
\mathcal{L}_i(p_{i-1}; f_i) = \underbrace{\alpha \KL\big(p_i \Vert p_y\big)}_{\text{Distillation Term}}
+ \underbrace{(1-\alpha) \KL\big(p_i \Vert \text{sg}(p_{i-1})\big)}_{\text{Consistency Term}}.
\end{equation}
Here, $p_i = f_i(p_{i-1}, c(x))$, $p_y$ is the one-hot distribution for the true label, $\alpha \in (0,1)$ is a hyperparameter, and $\text{sg}(\cdot)$ is the stop-gradient operator.

\begin{itemize}[leftmargin=*, nosep]
    \item \textbf{The Distillation Term} provides the primary supervisory signal, pulling the module's output belief $p_i$ towards the ground-truth target $p_y$.
    \item \textbf{The Consistency Term} acts as a regularizer. By penalizing large deviations from the input belief $p_{i-1}$, it encourages an incremental refinement process, preventing any single module from discarding useful information accumulated by its predecessors.
\end{itemize}

\paragraph{Module Decoupling via Stop-Gradient.}
The key mechanism for decoupling modules is the \textbf{stop-gradient} operator, which we introduce and denote as $\text{sg}(\cdot)$. This standard operator, available in deep learning frameworks~\citep{pytorch2019}, is mathematically an identity function during the forward pass but has a zero derivative during the backward pass. That is, for any variable $\mathbf{z}$, $\text{sg}(\mathbf{z}) = \mathbf{z}$ but $\nabla \text{sg}(\mathbf{z}) = \mathbf{0}$.

By applying this operator to the input belief in the consistency term, we treat $\text{sg}(p_{i-1})$ as a fixed constant—a non-differentiable target—during gradient computation. Consequently, when backpropagating through $\mathcal{L}_i$, the gradient flow from $p_i$ is blocked at $p_{i-1}$, severing the backward dependency to module $f_{i-1}$. This ensures that the gradient $\nabla \mathcal{L}_i$ only affects the parameters of the current module $f_i$ and the shared extractor $c$.

\subsection{The SID Training Algorithm}
\label{sec:training_algorithm}

The SID training process for a given minibatch contains two phases. We denote the parameters of the shared extractor $c$ as $\theta_c$ and the parameters of the module pipeline $\{f_i\}$ as $\boldsymbol{\theta}_{\text{modules}} = \{\theta_i\}_{i=1}^L$.

\begin{figure}[h!]
\begin{minipage}{0.49\textwidth}
\begin{algorithm}[H]
\caption{Generate Teacher Beliefs}
\label{alg:teacher_pass}
\begin{algorithmic}[1]
\STATE \textbf{Input:} Minibatch $B$, parameters $\theta_c, \boldsymbol{\theta}_{\text{modules}}$.
\STATE \textbf{Ensure:} Gradient computation is disabled.
\STATE $Z_{\text{detached}} \leftarrow c(B_x; \theta_c)$
\STATE $P_0 \leftarrow \text{Uniform}(\mathcal{Y})$
\STATE $\mathcal{P}_{\text{teachers}} \leftarrow [P_0]$ \COMMENT{Cache list}
\FOR{$i=1$ to $L-1$}
    \STATE $P_i \leftarrow f_i(\mathcal{P}_{\text{teachers}}[i-1], Z_{\text{detached}}; \theta_i)$
    \STATE Append $P_i$ to $\mathcal{P}_{\text{teachers}}$
\ENDFOR
\STATE \textbf{return} $\mathcal{P}_{\text{teachers}}$
\end{algorithmic}
\end{algorithm}
\end{minipage}\hfill
\begin{minipage}{0.49\textwidth}
\begin{algorithm}[H]
\caption{Parallel Local Updates}
\label{alg:parallel_update}
\begin{algorithmic}[1]
\STATE \textbf{Input:} Minibatch $B$, $\mathcal{P}_{\text{teachers}}$, params $\theta_c, \boldsymbol{\theta}_{\text{modules}}$.
\STATE $g_c \leftarrow 0$ \COMMENT{Initialize accumulator}
\STATE $Z \leftarrow c(B_x; \theta_c)$ \COMMENT{Build graph}
\FOR{$i=1$ to $L$ \textbf{in parallel}}
    \STATE $P_{i-1}^{\text{teacher}} \leftarrow \mathcal{P}_{\text{teachers}}[i-1]$
    \STATE $P_i^{\text{pred}} \leftarrow f_i(P_{i-1}^{\text{teacher}}, Z; \theta_i)$
    \STATE $P_y \leftarrow \text{OneHot}(B_y)$
    \STATE $\mathcal{L}_i \leftarrow \alpha \KL(P_i^{\text{pred}} \Vert P_y) + (1-\alpha) \KL(P_i^{\text{pred}} \Vert P_{i-1}^{\text{teacher}})$
    \STATE $g_i \leftarrow \nabla_{\theta_i} \mathcal{L}_i$
    \STATE $g_c \leftarrow g_c + \nabla_{\theta_c} \mathcal{L}_i$ \COMMENT{Accumulate}
\ENDFOR
\STATE \textbf{return} $\{g_i\}_{i=1}^L, g_c$
\end{algorithmic}
\end{algorithm}
\end{minipage}
\end{figure}

Training iterates over a dataset $\mathcal{D}$. For each minibatch $B=\{(x_k, y_k)\}$ sampled from $\mathcal{D}$, the process is as follows:

\paragraph{Phase 1: Teacher Generation (Algorithm~\ref{alg:teacher_pass}).}
The first phase generates the consistency targets for each module. Using the current network parameters, a single forward pass is executed with gradient computation disabled. This pass produces a sequence of teacher beliefs, $\mathcal{P}_{\text{teachers}} = \{P_0, P_1, \dots, P_{L-1}\}$, which are then cached. Because gradients are disabled, these beliefs are treated as fixed non-differentiable targets in the next phase.

\paragraph{Phase 2: Parallel Update (Algorithm~\ref{alg:parallel_update}).}
In the second phase, learning occurs. First, the shared features $Z$ are recomputed with gradients enabled to construct the computation graph. Then, each module $f_i$ is updated independently and in parallel. For each module $i$, its local loss $\mathcal{L}_i$ is computed using its corresponding cached teacher belief $P_{i-1}^{\text{teacher}}$ and the shared features $Z$. Gradients are then calculated: $\nabla_{\theta_i}\mathcal{L}_i$ for the module's own parameters and $\nabla_{\theta_c}\mathcal{L}_i$ for the shared extractor's parameters. The gradients from all modules with respect to the shared extractor are accumulated into $g_c$.

Finally, after all local gradients are computed, the parameters $\theta_c$ and all $\{\theta_i\}_{i=1}^L$ are updated in a single optimization step using the collected gradients $\{g_i\}$ and $g_c$. This two-phase design effectively eliminates the sequential dependencies inherent in BP, directly addressing update locking and memory consumption bottlenecks.
\section{Theoretical Analysis}
\label{sec:analysis}

We now provide a theoretical analysis of the SID framework to formally establish its core properties. Our goal is to demonstrate why SID serves as a robust and scalable alternative to backpropagation. We structure our analysis into three parts: convergence properties in an ideal setting, stability guarantees under practical conditions, and a formal complexity analysis.

To facilitate the analysis, we define the local objective as a functional $\mathcal{S}_i$. A functional is a function that takes another function—in this case, a probability distribution $p$—as its input and returns a scalar value. For a given prior belief $p_{i-1}$ and target distribution $p_y$, the functional is:
\begin{equation}
\mathcal{S}_i(p) \triangleq \alpha \KL(p \Vert p_y) + (1-\alpha) \KL(p \Vert p_{i-1}).
\end{equation}
The term $\mathcal{S}_i(p)$ represents the value of the local loss for any module that outputs a belief distribution $p$. The training goal for module $f_i$ is thus to find parameters $\theta_i$ that produce an output $p_i$ that minimizes this functional.

\subsection{Convergence and Optimality Analysis}
\label{sec:analysis_convergence}

We begin by characterizing the behavior of an ideal SID pipeline, which we define as a pipeline where every module perfectly minimizes its local objective. This idealized scenario allows us to understand the fundamental convergence properties of the cooperative refinement process, abstracting away from optimization imperfections.

\begin{assumption}[Optimal Local Updates]
\label{assum:optimal_updates}
We assume each module $f_i$ is sufficiently expressive and perfectly optimized such that its output $p_i$ is the unique minimizer of its local objective functional $\mathcal{S}_i(p)$, i.e., $p_i = \argmin_p \mathcal{S}_i(p)$. This assumption will be relaxed in Section~\ref{sec:analysis_robustness}.
\end{assumption}

Under this assumption, the cascade of local refinements leads to exponential convergence towards the target distribution.

\begin{proposition}[Closed-Form Cascade and Exponential Convergence]
\label{prop:cascade_convergence}
Suppose Assumption~\ref{assum:optimal_updates} holds. Given an initial uniform belief $p_0$, the belief $p_i$ after module $i$ is a geometric interpolation between $p_0$ and $p_y$:
\begin{equation}
p_i(k) \propto p_0(k)^{(1-\alpha)^i} p_y(k)^{1-(1-\alpha)^i}, \quad \forall k \in \mathcal{Y}.
\end{equation}
The symbol $\propto$ denotes proportionality up to a normalization constant. Consequently, as the number of modules $i \to \infty$, the belief $p_i$ converges pointwise to the target distribution $p_y$. The rate of convergence is geometric, determined by the factor $(1-\alpha)$.
\end{proposition}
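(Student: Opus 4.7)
The plan is to reduce the proposition to a one-step closed-form minimizer and then unroll the recursion. First, I would rewrite the weighted sum of KL terms in $\mathcal{S}_i$ as a single KL divergence against a log-linear mixture, plus a constant. Specifically, defining $q(k) \triangleq p_y(k)^\alpha\, p_{i-1}(k)^{1-\alpha}$ and $Z \triangleq \sum_k q(k)$, a direct expansion of $\mathcal{S}_i(p)$ using $\alpha\log p_y(k) + (1-\alpha)\log p_{i-1}(k) = \log q(k)$ shows that $\mathcal{S}_i(p) = \KL(p\Vert q/Z) - \log Z$. Since $\KL \ge 0$ on the simplex with equality iff its arguments agree, and the offset $-\log Z$ does not depend on $p$, the unique minimizer is $p_i = q/Z$, yielding the clean one-step update $p_i(k) \propto p_y(k)^\alpha\, p_{i-1}(k)^{1-\alpha}$, i.e.\ the normalized weighted geometric mean of the target and the prior belief. (A Lagrangian KKT derivation with constraint $\sum_k p(k)=1$ reaches the same formula and serves as a sanity check.)

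Next, I would unroll this one-step recursion by induction on $i$. Assuming inductively that $p_{i-1}(k) \propto p_0(k)^{\gamma_{i-1}}\, p_y(k)^{\beta_{i-1}}$, substituting into the update composes the exponents via $\gamma_i = (1-\alpha)\gamma_{i-1}$ and $\beta_i = \alpha + (1-\alpha)\beta_{i-1}$, with base case $\gamma_0 = 1$, $\beta_0 = 0$ (since $p_0$ is uniform). These elementary linear recurrences solve in closed form to $\gamma_i = (1-\alpha)^i$ and $\beta_i = 1 - (1-\alpha)^i$, matching the proposition's geometric interpolation exactly. Pointwise convergence then follows from $\alpha \in (0,1)$: as $i\to\infty$ the factor $(1-\alpha)^i \to 0$, so each coordinate's unnormalized mass tends to $p_y(k)$ and the normalizer tends to $\sum_j p_y(j) = 1$ by continuity. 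A first-order expansion $p_y(k)^{1-(1-\alpha)^i} = p_y(k)\bigl(1 - (1-\alpha)^i \log p_y(k) + o((1-\alpha)^i)\bigr)$ then exposes the geometric decay rate $(1-\alpha)$ in every coordinate.

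The main subtlety I expect to confront is the boundary behavior when $p_y$ is genuinely one-hot and has zero entries, because then $\KL(p\Vert p_y)$ is infinite off the support of $p_y$ and the clean identity $\mathcal{S}_i = \KL(p\Vert q/Z) - \log Z$ requires care. I would handle this either by working with a label-smoothed $p_y$ of full support (under which the derivation is entirely rigorous, with the hard-label case recovered by letting the smoothing tend to zero), or by interpreting the geometric-mean expression pointwise with the convention $0^a = 0$ for $a>0$: off the support of $p_y$ the output mass is forced to zero for every $i\ge 1$, so $p_i$ collapses to $p_y$ already after the first module, which is consistent with (indeed strictly stronger than) the stated geometric convergence. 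Either way, the closed-form exponents and the claimed rate in the proposition are preserved.
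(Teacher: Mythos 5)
Your proposal is correct and follows the same overall architecture as the paper's proof: first establish the closed-form one-step minimizer $p_i(k)\propto p_{i-1}(k)^{1-\alpha}p_y(k)^{\alpha}$, then unroll the cascade by induction on $i$ to obtain the exponents $(1-\alpha)^i$ and $1-(1-\alpha)^i$, and conclude geometric convergence from $(1-\alpha)^i\to 0$. The one genuine difference is how you prove the one-step lemma: the paper argues strict convexity of $\mathcal{S}_i$ and solves the stationarity condition of a Lagrangian with the simplex constraint, whereas you rewrite $\mathcal{S}_i(p)=\KL\big(p\,\Vert\,q/Z\big)-\log Z$ with $q(k)=p_y(k)^{\alpha}p_{i-1}(k)^{1-\alpha}$ and read off the minimizer from nonnegativity of the KL divergence. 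Your identity is correct and arguably cleaner: it gives uniqueness and global optimality in one stroke without calculus of variations or checking second-order conditions, and it also quantifies the optimal value ($-\log Z$), which the Lagrangian route does not provide directly. Your induction via the linear recurrences $\gamma_i=(1-\alpha)\gamma_{i-1}$, $\beta_i=\alpha+(1-\alpha)\beta_{i-1}$ is just a bookkeeping variant of the paper's direct substitution. Finally, your explicit treatment of the one-hot boundary case (label smoothing, or the convention that zero-mass coordinates collapse immediately) addresses a point the paper dispatches by assuming full support at the start of the appendix; this is a sensible addition and does not change the stated conclusion.
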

\begin{proof}
See Appendix~\ref{app:proof_cascade_convergence}.
\end{proof}

\paragraph{Interpretation.} Proposition~\ref{prop:cascade_convergence} provides the theoretical foundation for SID's effectiveness. It demonstrates that a sequence of local greedy optimizations can collectively solve the global learning task. Even though each module performs only a small conservative update governed by $\alpha$, their composition rapidly concentrates the belief mass onto the correct label. While this optimality may not hold in practice, the result establishes that the underlying mechanism of SID is sound.

\subsection{Robustness and Stability Guarantees}
\label{sec:analysis_robustness}

In practice, neural network modules trained with finite data and gradient-based methods will not perfectly minimize their local objectives. A crucial property for any practical algorithm is robustness to such imperfections. We now show that SID maintains a strong stability guarantee under a much weaker condition: that each module simply \emph{improves} upon its local objective.

\begin{proposition}[Monotonic Descent Guarantee]
\label{prop:monotonic_descent}
Suppose that a module's output $p_i$ satisfies the local improvement condition $\mathcal{S}_i(p_i) \le \mathcal{S}_i(p_{i-1})$. Then, the KL divergence to the target is guaranteed to be non-increasing. Summing this guarantee over all $L$ modules yields a telescoping bound for the entire network:
\begin{equation}
\KL(p_L \Vert p_y) \le \KL(p_0 \Vert p_y) - \frac{1-\alpha}{\alpha} \sum_{i=1}^L \KL(p_i \Vert p_{i-1}).
\end{equation}
\end{proposition}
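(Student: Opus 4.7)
The plan is to turn the single-step improvement hypothesis $\mathcal{S}_i(p_i)\le \mathcal{S}_i(p_{i-1})$ into a one-step descent inequality on $\KL(p_i\Vert p_y)$, and then sum over $i=1,\dots,L$ so that the target-divergence terms telescope cleanly. The whole argument is purely algebraic, driven by evaluating the local functional at two specific arguments and exploiting one trivial identity.

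The pivotal first step is the observation that $\KL(p_{i-1}\Vert p_{i-1})=0$, which annihilates the consistency half of $\mathcal{S}_i$ when evaluated at the prior belief and yields the clean identity $\mathcal{S}_i(p_{i-1}) = \alpha\KL(p_{i-1}\Vert p_y)$. Meanwhile, by definition, $\mathcal{S}_i(p_i) = \alpha\KL(p_i\Vert p_y) + (1-\alpha)\KL(p_i\Vert p_{i-1})$. Substituting both expressions into the hypothesis and dividing by $\alpha>0$ gives the one-step descent inequality
\begin{equation}
\KL(p_i\Vert p_y) \;\le\; \KL(p_{i-1}\Vert p_y) \;-\; \frac{1-\alpha}{\alpha}\,\KL(p_i\Vert p_{i-1}).
\end{equation}
Since the subtracted term is non-negative by Gibbs' inequality and $(1-\alpha)/\alpha>0$, monotone non-increase of the target divergence follows immediately.

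The final step is a routine telescoping: summing the above one-step inequality from $i=1$ to $L$ causes the left-hand side to collapse to $\KL(p_L\Vert p_y) - \KL(p_0\Vert p_y)$, and rearranging produces exactly the stated bound.

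There is no real obstacle here; the only subtlety, and the "hard part" to spot on a first reading, is the opening observation that the prior belief $p_{i-1}$ is a zero of its own consistency term, which is what turns the functional inequality into a bound on the target-divergence drop. I should stress in the write-up that Assumption~\ref{assum:optimal_updates} is \emph{not} invoked: descent is inherited from mere local improvement rather than optimality, which is precisely the practical content of the proposition. All that is required is $\alpha\in(0,1)$ and non-negativity of the KL divergence.
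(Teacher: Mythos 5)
Your proof is correct and follows essentially the same route as the paper's: evaluate $\mathcal{S}_i$ at $p_{i-1}$ (where the consistency term vanishes), rearrange the improvement condition into the one-step descent inequality, and telescope the sum over $i=1,\dots,L$. Your added remarks—that non-negativity of $\KL(p_i \Vert p_{i-1})$ gives the monotonicity claim and that Assumption~\ref{assum:optimal_updates} is never needed—are accurate and consistent with the paper's argument.
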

\begin{proof}
See Appendix~\ref{app:proof_monotonic_descent}.
\end{proof}

\paragraph{Interpretation.} This result is central to SID's practicality. It guarantees that as long as each local update is productive (i.e., reduces the local loss, a condition easily met with a sufficiently small learning rate), the overall network's performance with respect to the true target will not degrade with added depth. This ensures a stable and well-behaved training process, preventing catastrophic divergence. Furthermore, for the network to converge, the term $\sum_i \KL(p_i \Vert p_{i-1})$ must be bounded, which implies that $\KL(p_i \Vert p_{i-1})$ must approach zero for deep networks. This indicates that the belief sequence becomes asymptotically stationary, ensuring a stable flow of information.

\subsection{Scalability and Parallelism Analysis}
\label{sec:analysis_scalability}

Finally, we analyze SID's computational complexity to demonstrate how it addresses the scalability bottlenecks of backpropagation. We consider a network of $L$ modules executed on a system with $P$ parallel processing devices.

\begin{proposition}[Computational and Memory Complexity]
\label{prop:scalability}
Let $C_f^{(i)}$ and $C_b^{(i)}$ be the forward and backward computation costs for module $i$, and let $A_i$ be its activation memory cost.
\begin{enumerate}[nosep, leftmargin=*]
    \item \textbf{Time Complexity (Speedup):} In a parallel setting ($P \ge L$), the training time for SID is $T_{\mathrm{SID}} \approx \sum_{i=1}^L C_f^{(i)} + \max_{i=1}^L C_b^{(i)}$, whereas for BP it is $T_{\mathrm{BP}} = \sum_{i=1}^L (C_f^{(i)} + C_b^{(i)})$. SID offers a theoretical speedup by parallelizing the backward passes, eliminating the sequential dependency that causes \textbf{update locking}.
    \item \textbf{Memory Complexity (Savings):} The peak activation memory required by BP is $M_{\mathrm{BP}} \approx \sum_{i=1}^L A_i$. With SID on $P \ge L$ devices, the per-device peak memory is $M_{\mathrm{SID}} \approx \max_{i=1}^L A_i$. For deep networks ($L \gg 1$), this constitutes a substantial reduction in \textbf{memory consumption}, scaling with the size of the largest module rather than the full network depth.
\end{enumerate}
\end{proposition}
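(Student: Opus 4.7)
The proposition is an accounting argument based on Algorithms~\ref{alg:teacher_pass} and~\ref{alg:parallel_update}: the plan is to partition the work of one training step into stages, identify the critical path of each stage, and then sum. The one nontrivial hypothesis, the claimed decoupling in Phase~2, has already been justified by the stop-gradient construction in Section~\ref{sec:local_objective}, so I can appeal to it freely.

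For the time claim, I first dispatch BP. The chain rule forces both passes to be strictly sequential: the forward pass of layer $i+1$ awaits the activation of layer $i$, and its backward pass awaits the incoming gradient from layer $i+1$. Summing over layers yields $T_{\mathrm{BP}} = \sum_i (C_f^{(i)} + C_b^{(i)})$ with no opportunity for inter-layer parallelism. For SID, Phase~1 is a forward-only cascade of teacher beliefs; it remains sequential because $P_i$ depends on $P_{i-1}$, but it carries no backward cost, contributing $\sum_i C_f^{(i)}$. In Phase~2, the local loss $\mathcal{L}_i$ depends only on the cached constant $P_{i-1}^{\text{teacher}}$, the shared features $Z$, and $\theta_i$, so the $L$ local optimizations are mutually independent jobs. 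Assigning one job per device on $P \ge L$ devices, this stage finishes in time $\max_i (C_f^{(i)} + C_b^{(i)})$, which is dominated by $\max_i C_b^{(i)}$ in typical autograd implementations; adding the two phases yields the stated $T_{\mathrm{SID}}$.

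For the memory claim, BP requires every activation $A_i$ to remain live from the moment it is produced until it is consumed by its corresponding backward computation, so the simultaneous residency of all $L$ activation tensors gives $M_{\mathrm{BP}} \approx \sum_i A_i$. For SID, Phase~1 runs without autograd, so intermediate activations inside each module can be released immediately and only the $L$ belief vectors (each of size $|\mathcal{Y}|$, independent of layer width) persist. In Phase~2, device $i$ hosts only module $f_i$ and therefore allocates activation memory of order $A_i$; taking the maximum across devices gives per-device $M_{\mathrm{SID}} \approx \max_i A_i$, which decouples peak memory from network depth.

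The proof is essentially structural bookkeeping, so I do not expect a deep obstacle. The one delicate step is verifying that Phase~2 truly admits lock-free parallelization: this rests on the stop-gradient turning $P_{i-1}^{\text{teacher}}$ into a non-differentiable constant and thereby severing every backward edge between $f_i$ and $f_{i-1}$. The remaining fine print—handling the shared extractor $c$, the accumulation $g_c \leftarrow g_c + \nabla_{\theta_c}\mathcal{L}_i$, the recomputation of $Z$ with gradients enabled at the start of Phase~2, and device-local replicas of $\theta_c$—contributes only lower-order terms that scale with the size of $c$ rather than with $L$, and are therefore absorbed into the ``$\approx$'' in the statement. A careful writeup would either bound these terms explicitly or note their depth-independence.
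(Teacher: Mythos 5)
Your proposal is correct and follows essentially the same accounting argument as the paper's proof sketch: sequential forward/backward sums for BP, a sequential teacher forward plus a parallelized backward stage bounded by $\max_i C_b^{(i)}$ for SID, and per-device activation residency $\max_i A_i$ versus BP's simultaneous residency $\sum_i A_i$. Your extra bookkeeping for the Phase-2 forward recomputation, the shared extractor $c$, and the gradient accumulation $g_c$ is a slightly more careful treatment of terms the paper simply absorbs into the ``$\approx$'' (and its ``ignoring communication for aggregation'' remark), but it does not change the argument.
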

\begin{proof}
See Appendix~\ref{app:proof_scalability}.
\end{proof}

\paragraph{Interpretation.} Proposition~\ref{prop:scalability} directly connects SID's design to its core motivations. By breaking the end-to-end gradient chain, SID's architecture allows the computationally intensive backward pass to be parallelized. This architectural advantage, combined with the dramatically reduced memory footprint, makes SID an inherently more scalable training framework than standard backpropagation, particularly for very deep or large models.
\section{Experiments}

We conduct a comprehensive empirical evaluation to validate the performance, scalability, and robustness of SID.

\subsection{Experimental Setup}

\begin{figure}[t!]
    \centering
    \includegraphics[width=\textwidth]{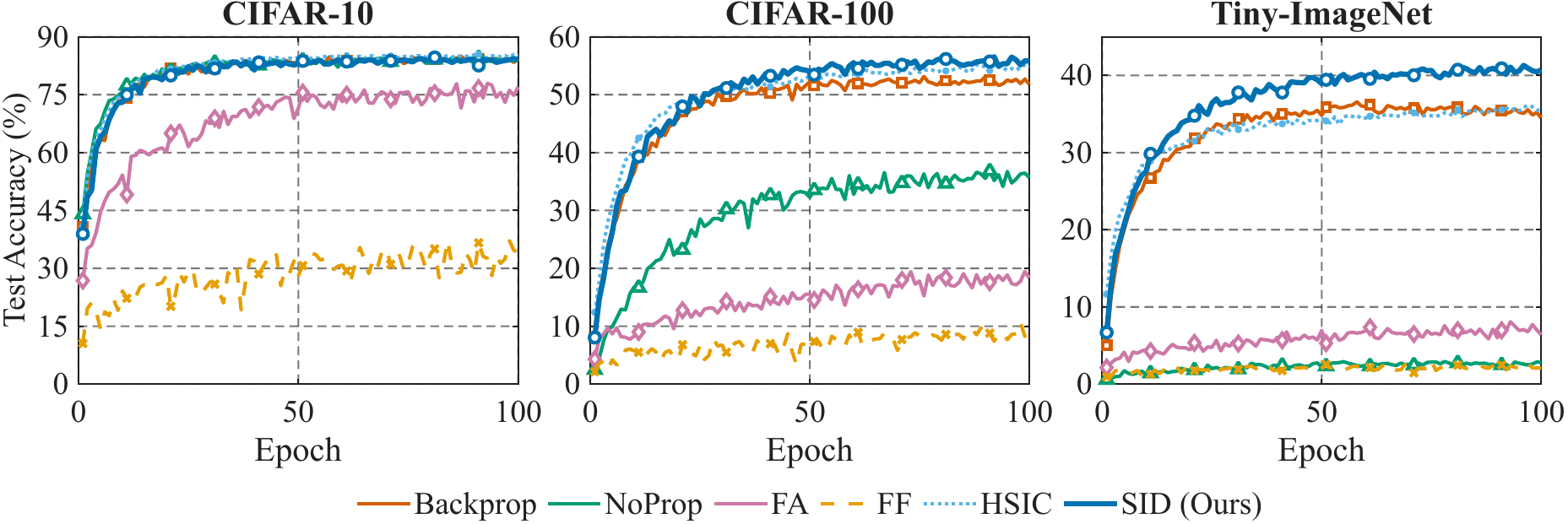}
    \caption{Test accuracy convergence curves on CIFAR-10 (left), CIFAR-100 (center), and Tiny-ImageNet (right). SID (blue, bold) demonstrates superior performance on more complex datasets. This performance gain is achieved alongside SID's significant reductions in time and memory complexity (see Section 4.3).}
    \label{fig:main_training_curves}
\end{figure}

We evaluate SID against standard \textbf{BP} and a suite of backpropagation-free baselines (\textbf{NoProp}~\citep{li2025noproptrainingneuralnetworks}, \textbf{FA}~\citep{nokland2016direct}, \textbf{FF}~\citep{hinton2022forward}, \textbf{HSIC}~\citep{ma2019hsicbottleneckdeeplearning}) on CIFAR-10, CIFAR-100~\citep{Krizhevsky2009LearningML}, and Tiny-ImageNet~\citep{Le2015TinyIV}. To create a challenging optimization benchmark, we designed a VGG-style SimpleCNN composed of a shared convolutional extractor and a deep stack of MLP modules. Its deep structure without residual connections makes it susceptible to optimization difficulties, thereby allowing for a rigorous evaluation of each training method's effectiveness. For generality, we also report the results on standard architectures. All experiments are averaged over three random seeds. (See Appendix~\ref{app:arch_details} for detailed architecture and setup descriptions).

\subsection{Performance and Scalability Analysis}

\paragraph{Observation 1: The performance improvement of SID grows with task complexity.}
As shown in Table~\ref{tab:benchmark}, SID's performance scales effectively with task difficulty. On CIFAR-10, SID's accuracy is statistically on par with the BP and strong HSIC baselines. However, as task complexity increases, SID establishes a clear advantage, outperforming BP by 4.3\% on CIFAR-100 and 6.8\% on Tiny-ImageNet. The convergence curves in Figure~\ref{fig:main_training_curves} further illustrate this: on more complex datasets, SID's accuracy consistently improves while BP's learning curve begins to plateau. This suggests SID's local cooperative learning mechanism is more effective at navigating complex loss landscapes.

\begin{table}[t!]
\centering
\caption{
    Final test accuracy (\%) on vision benchmarks, reported as mean $\pm$ std over three seeds. The best-performing local method is in bold, second-best is underlined.
    Performance improvement of SID over BP is shown in blue.
}
\label{tab:benchmark}
\renewcommand{\arraystretch}{1.1}
\begin{tabular}{l ccc}
\toprule
\textbf{Method} & \textbf{CIFAR-10} & \textbf{CIFAR-100} & \textbf{Tiny-ImageNet} \\
\cmidrule(r){1-1} \cmidrule(lr){2-2} \cmidrule(lr){3-3} \cmidrule(l){4-4}
\multicolumn{4}{l}{\emph{Global Gradient Methods}} \\
Backpropagation (BP) & $84.32 \pm 0.45$ & $51.81 \pm 0.52$ & $34.51 \pm 0.61$ \\
\midrule
\multicolumn{4}{l}{\emph{Local Gradient Methods}} \\
Feedback Alignment (FA) & $77.03 \pm 0.68$ & $18.32 \pm 0.75$ & $6.39 \pm 0.42$ \\
\textbf{SID (Ours)} & $\underline{84.42 \pm 0.35}$ & $\mathbf{56.12 \pm 0.38}$ (\textcolor{blue}{\textbf{+4.31}}) & $\mathbf{41.37 \pm 0.45}$ (\textcolor{blue}{\textbf{+6.86}}) \\
\midrule
\multicolumn{4}{l}{\emph{Purely Local / Forward-Only Methods}} \\
NoProp & $84.18 \pm 0.33$ & $35.69 \pm 0.59$ & $2.80 \pm 0.25$ \\
Forward-Forward (FF) & $34.10 \pm 0.81$ & $9.13 \pm 0.48$ & $2.13 \pm 0.19$ \\
HSIC-based & $\mathbf{85.21 \pm 0.29}$  & $\underline{55.18 \pm 0.41}$  & $\underline{35.61 \pm 0.38}$ \\
\bottomrule
\end{tabular}
\end{table}

\paragraph{Observation 2: SID resolves update locking, enabling strong parallel scalability.}
A core motivation for SID is to eliminate the sequential dependency of the backward pass. To quantify this benefit, we conducted a computational profiling experiment. We first timed the forward and backward passes of each component on a single GPU to isolate pure computational costs from communication overhead. We then projected these timings to a multi-GPU scenario assuming ideal model parallelism, where modules are distributed across processors\citep{Chen2016checkpointing, Rajbhandari2019zero, Goyal2017largebatch, Dean2012distbelief} (see Appendix~\ref{app:profiling_methodology}). This analysis yields a \textit{theoretical speedup} which quantifies the architectural advantage of SID's parallelizable design. As shown in Figure~\ref{fig:projected_speedup}, the speedup scales robustly with the number of processors ($P$) and network depth ($L$). For a deep network with $L=64$, the projected speedup approaches 2.4x, confirming that SID's local updates effectively address the BP bottleneck.
\begin{wrapfigure}{r}{0.35\textwidth}
    \vspace{-15pt}
    \centering
    \includegraphics[width=0.34\textwidth]{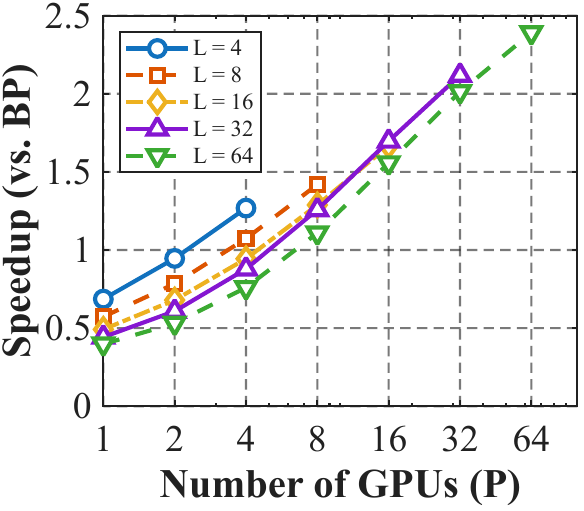} 
    \caption{Theoretical speedup of SID over BP.}
    \label{fig:projected_speedup}
    \vspace{-10pt}
\end{wrapfigure}
\paragraph{Observation 3: SID demonstrates superior stability with network depth.}
SID's architectural advantages also lead to improved optimization stability in deep networks. To evaluate this, we trained the SimpleCNN on CIFAR-100 at varying depths. This architecture, lacking residual connections, creates a difficult optimization landscape that highlights potential training failures. As shown in Figure~\ref{fig:depth_scaling} (left), SID's accuracy improves monotonically with network depth. In contrast, BP's performance degrades after 8 modules, a classic sign of optimization failure in very deep networks. On a ResNet backbone (right), where skip-connections mitigate this issue for BP, SID still shows more consistent improvement, further highlighting the inherent stability of its local regularized learning process.

\begin{figure}[h!]
    \centering
    \includegraphics[width=0.8\textwidth]{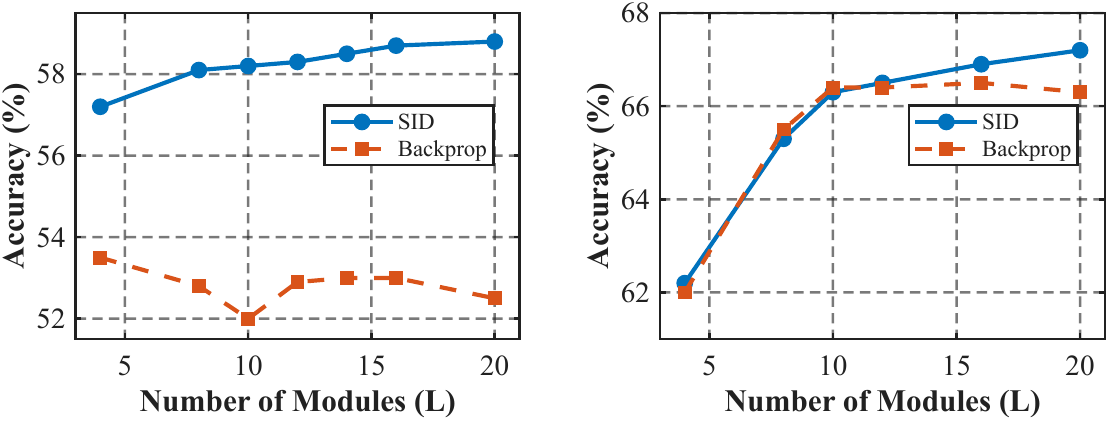}
    \caption{Depth scaling on CIFAR-100. \textbf{Left:} On a SimpleCNN, SID's accuracy improves with depth while BP's degrades. \textbf{Right:} On a ResNet, SID shows more consistent improvement.}
    \label{fig:depth_scaling}
\end{figure}

\paragraph{Generality and Ablations.}
SID is a general-purpose strategy that exhibits a consistent performance advantage over BP across modern architectures, including VGG~\citep{simonyan2015deepconvolutionalnetworkslargescale}, ResNet~\citep{he2015deepresiduallearningimage}, and ViT~\citep{dosovitskiy2021imageworth16x16words} (see Table~\ref{tab:app_arch_comp} in Appendix~\ref{app:generality}).Ablation studies confirm the robustness of the hyperparameter $\alpha$ and the necessity of cooperatively updating the shared extractor with gradients from all modules (see Appendix~\ref{app:ablations}).

\subsection{Analysis of Internal Mechanisms}

\paragraph{SID exhibits a ``converge-then-refine" learning dynamic.}
To understand the performance differences between SID and BP, we analyzed their internal layer-wise belief representations using Centered Kernel Alignment (CKA)\citep{kornblith2019similarityneuralnetworkrepresentations}. For this analysis, we regard the softmax output of any given layer as its probabilistic ``belief." The CKA heatmap in Figure~\ref{fig:internal_mechanisms} (left) reveals a distinct structural difference in how representations are formed. SID's early layers (e.g., layers 1-3, y-axis) already exhibit high similarity to BP's \emph{final} layer belief ($p_8$, x-axis), evidenced by the strong vertical stripe in the last column. This indicates that SID's network produces a high-quality prediction candidate early in its hierarchy, which subsequent layers then incrementally refine.

This phenomenon is further explored in Figure~\ref{fig:internal_mechanisms} (right), which tracks belief evolution for ``hard" samples (those misclassified by at least one method). On these samples, SID's belief in the true class often increases monotonically and smoothly. Conversely, BP's belief can be erratic, with sharp drops in intermediate layers (similar observations exist in representation dynamics analyses). This stable, incremental refinement, enforced by the local consistency term in SID's objective, likely contributes to its enhanced robustness and superior performance on challenging tasks. 

\begin{figure}[t!]
    \centering
    \begin{minipage}{0.38\textwidth}
        \centering
        \includegraphics[width=\linewidth]{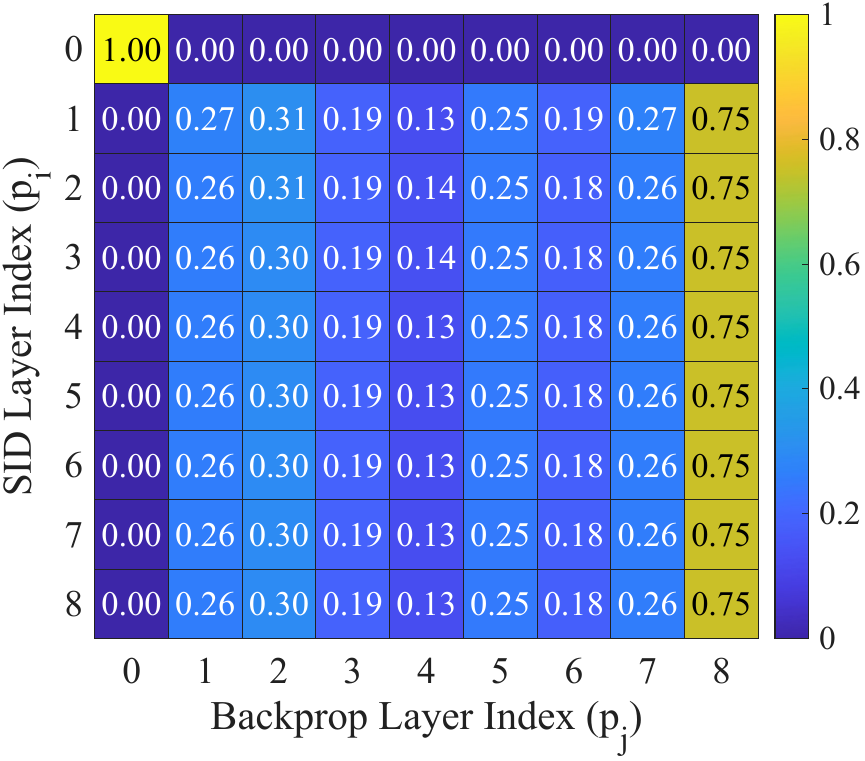}
    \end{minipage}\hfill
    \begin{minipage}{0.6\textwidth}
        \centering
        \includegraphics[width=\linewidth]{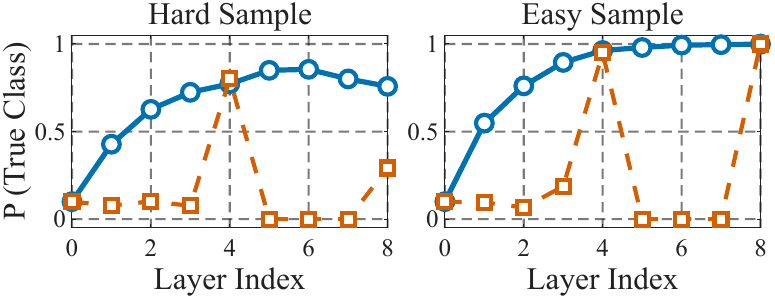}
    \end{minipage}
    \caption{\textbf{Left:} CKA similarity between SID and BP beliefs. The vertical stripe in the final column highlights SID's ``converge-then-refine" dynamic. \textbf{Right:} Belief evolution on hard examples (misclassified by at least one model). SID's belief in the true class (blue) shows a more stable and monotonic progression than BP's (orange).}
    \label{fig:internal_mechanisms}
\end{figure}

\section{Conclusion}

In this work, we introduced SID, a novel training framework that resolves the critical update locking and memory consumption bottlenecks of BP by reframing learning as a cooperative cascade of local belief refinements. By decoupling modules via a stop-gradient on a local consistency objective, SID enables memory-efficient and parallelizable training without altering the standard inference pass. Our theoretical analysis demonstrates a monotonic descent guarantee of SID, where this guarantee ensures robust and stable training. Empirically, we demonstrated that SID's performance matches or surpasses that of BP, with its advantage growing significantly on more complex tasks and deeper networks, all while having lower computational complexity. Further analysis revealed SID's unique ``converge-then-refine" learning dynamic, confirming its distinctness from end-to-end training. While this work establishes a strong foundation, limitations exist. Our empirical validation is primarily focused on image classification, and the full extent of SID's applicability to other domains, such as natural language processing or reinforcement learning, remains to be explored. Future research should therefore focus on adapting and evaluating SID on these diverse tasks and architectures. A particularly promising direction will be to leverage SID's inherent scalability to train large-scale foundation models, where its benefits could be most transformative.In summary, our results establish SID as a powerful, scalable, and practical alternative to BP, offering promising techniques for training large-scale models.
\bibliography{main}
\bibliographystyle{iclr2026_conference}

\newpage
\appendix
\section*{LLM Usage Statement}
In preparing this paper, Large Language Models (LLMs) were used as an assistive tool for 
grammar refinement and clarity improvements in writing. The scientific contributions, research 
design, experiments, and analysis were entirely conducted by the authors. LLMs did not 
contribute to research ideation, algorithm design, or empirical results. All content has been 
reviewed and validated by the authors, who take full responsibility for its accuracy.
\section{Proofs for Theoretical Analysis}
\label{app:proofs}

In this appendix, we provide the detailed proofs for the propositions presented in Section~\ref{sec:analysis}. Throughout the proofs, we denote a discrete probability distribution over $\mathcal{Y}=\{1, \dots, m\}$ as a vector $p \in \Delta^{m-1}$, where $\Delta^{m-1}$ is the $(m-1)$-simplex. All KL divergences assume the distributions have full support, which can be ensured in practice with label smoothing.

\subsection{Proof of Proposition~\ref{prop:cascade_convergence}}
\label{app:proof_cascade_convergence}

Proposition~\ref{prop:cascade_convergence} relies on first finding the closed-form minimizer of the local objective $\mathcal{S}_i(p)$. Let's call this intermediate result Lemma~\ref{lem:local_minimizer}.

\begin{lemma}[Closed-Form Local Minimizer]
\label{lem:local_minimizer}
Given fixed distributions $p_{i-1}$ and $p_y$ with full support, the functional $\mathcal{S}_i(p) = \alpha \KL(p \Vert p_y) + (1-\alpha) \KL(p \Vert p_{i-1})$ is strictly convex. Its unique minimizer, $p_i^\star \triangleq \argmin_p \mathcal{S}_i(p)$, is given by the normalized power mean (geometric interpolation):
\begin{equation}
p_i^\star(k) = \frac{1}{Z_i} p_{i-1}(k)^{1-\alpha} p_y(k)^{\alpha}, \quad \text{where } Z_i = \sum_{j=1}^m p_{i-1}(j)^{1-\alpha} p_y(j)^{\alpha}.
\end{equation}
Equivalently, this can be expressed as $p_i^\star(k) \propto p_{i-1}(k)^{1-\alpha} p_y(k)^{\alpha}$ for each $k \in \mathcal{Y}$.
\end{lemma}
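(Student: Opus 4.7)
The plan is to treat the minimization of $\mathcal{S}_i$ as a strictly convex optimization over the probability simplex with the non-negativity constraints automatically inactive (by the full-support assumption), which reduces the problem to a Lagrangian carrying only the normalization constraint.

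First, I would establish strict convexity. Expanding each KL term as $\KL(p \Vert q) = \sum_k p(k)\log p(k) - \sum_k p(k)\log q(k)$, one sees that $\mathcal{S}_i$ decomposes into the negative-entropy piece $\sum_k p(k)\log p(k)$ plus a linear functional in $p$ whose coefficients depend on $\log p_y$ and $\log p_{i-1}$. The negative-entropy piece is strictly convex on the relative interior of $\Delta^{m-1}$, and strict convexity is preserved after adding a linear term; hence $\mathcal{S}_i$ is strictly convex, which secures uniqueness of any minimizer.

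Next, I would form the Lagrangian $\mathcal{L}(p,\nu) = \mathcal{S}_i(p) - \nu\bigl(\sum_k p(k) - 1\bigr)$, differentiate coordinate-wise, and set the gradient to zero. The resulting stationarity condition collapses to $\log p(k) = (\nu - 1) + \alpha\log p_y(k) + (1-\alpha)\log p_{i-1}(k)$, which exponentiates to the proportionality $p(k) \propto p_y(k)^{\alpha}\, p_{i-1}(k)^{1-\alpha}$. Enforcing $\sum_k p(k) = 1$ then pins down the multiplier and yields the claimed partition function $Z_i = \sum_j p_{i-1}(j)^{1-\alpha} p_y(j)^{\alpha}$.

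Finally, I would confirm admissibility and conclude: the full-support assumption guarantees $p_y(k), p_{i-1}(k) > 0$, so $p_i^\star(k) > 0$ for every $k$, the inequality constraints are inactive, and the first-order condition is both necessary and sufficient by strict convexity. The main obstacle is essentially bookkeeping — the only delicate point is justifying that the non-negativity constraints can be safely dropped, which is exactly what the full-support hypothesis (and, in practice, the label-smoothing remark) secures; without it, the $\log$ terms are ill-defined and the interior-solution argument collapses.
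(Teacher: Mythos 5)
Your proposal is correct and follows essentially the same route as the paper's proof: strict convexity of $\mathcal{S}_i$ (the paper argues via weighted sums of strictly convex KL terms, you via the equivalent negative-entropy-plus-linear decomposition) followed by Lagrangian stationarity on the normalization constraint, exponentiation, and normalization to get $Z_i$. Your extra remark that full support keeps the non-negativity constraints inactive is a small but welcome tightening of a point the paper leaves implicit.
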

\begin{proof}[Proof of Lemma~\ref{lem:local_minimizer}]
The KL divergence is strictly convex, and a non-negative weighted sum of strictly convex functions is also strictly convex. Thus, $\mathcal{S}_i(p)$ is strictly convex and has a unique minimizer. To find it, we can use Lagrange multipliers to enforce the constraint $\sum_k p(k) = 1$. The Lagrangian is:
\[ \mathcal{L}(p, \lambda) = \sum_k p(k) \log \frac{p(k)^{\alpha}}{p_y(k)^{\alpha}} + \sum_k p(k) \log \frac{p(k)^{1-\alpha}}{p_{i-1}(k)^{1-\alpha}} + \lambda\left(1 - \sum_k p(k)\right). \]
Taking the derivative with respect to $p(k)$ and setting it to zero:
\begin{align*}
\frac{\partial \mathcal{L}}{\partial p(k)} &= \alpha (\log p(k) + 1) - \alpha \log p_y(k) + (1-\alpha)(\log p(k) + 1) - (1-\alpha)\log p_{i-1}(k) - \lambda = 0 \\
&\implies \log p(k) + 1 - \alpha \log p_y(k) - (1-\alpha)\log p_{i-1}(k) = \lambda \\
&\implies \log p(k) = \lambda - 1 + \log\left(p_{i-1}(k)^{1-\alpha} p_y(k)^{\alpha}\right) \\
&\implies p(k) = \exp(\lambda - 1) \cdot p_{i-1}(k)^{1-\alpha} p_y(k)^{\alpha}.
\end{align*}
Let $1/Z_i = \exp(\lambda-1)$. This constant is determined by the summation constraint $\sum_k p(k)=1$, which gives $Z_i = \sum_j p_{i-1}(j)^{1-\alpha} p_y(j)^{\alpha}$. This completes the proof of the lemma.
\end{proof}

\begin{proof}[Proof of Proposition~\ref{prop:cascade_convergence}]
We proceed by induction. From Lemma~\ref{lem:local_minimizer}, under Assumption 1, the output of module $i$ is $p_i(k) \propto p_{i-1}(k)^{1-\alpha} p_y(k)^{\alpha}$.
\paragraph{Base case (i=1):}
\[ p_1(k) \propto p_0(k)^{1-\alpha} p_y(k)^{\alpha}. \]
This matches the formula $p_i(k) \propto p_0(k)^{(1-\alpha)^i} p_y(k)^{1-(1-\alpha)^i}$ for $i=1$, since $1-(1-\alpha)^1 = \alpha$.
\paragraph{Inductive step:}
Assume the formula holds for $i-1$: $p_{i-1}(k) \propto p_0(k)^{(1-\alpha)^{i-1}} p_y(k)^{1-(1-\alpha)^{i-1}}$.
Then for module $i$:
\begin{align*}
p_i(k) &\propto p_{i-1}(k)^{1-\alpha} p_y(k)^{\alpha} \\
&\propto \left( p_0(k)^{(1-\alpha)^{i-1}} p_y(k)^{1-(1-\alpha)^{i-1}} \right)^{1-\alpha} p_y(k)^{\alpha} \\
&\propto p_0(k)^{(1-\alpha)^{i-1}(1-\alpha)} p_y(k)^{(1-(1-\alpha)^{i-1})(1-\alpha)} p_y(k)^{\alpha} \\
&\propto p_0(k)^{(1-\alpha)^i} p_y(k)^{1-\alpha - (1-\alpha)^i + \alpha} \\
&\propto p_0(k)^{(1-\alpha)^i} p_y(k)^{1-(1-\alpha)^i}.
\end{align*}
This completes the induction. For convergence, since $\alpha \in (0,1)$, we have $(1-\alpha) \in (0,1)$. Thus, as $i \to \infty$, the exponent $(1-\alpha)^i \to 0$. The exponent of $p_y(k)$, which is $1-(1-\alpha)^i$, goes to $1$. Therefore, $p_i(k) \propto p_0(k)^0 p_y(k)^1 = p_y(k)$. Since this holds for all $k$, $p_i$ converges to $p_y$. The convergence rate is determined by how quickly $(1-\alpha)^i$ approaches zero, which is geometric.
\end{proof}

\subsection{Proof of Proposition~\ref{prop:monotonic_descent}}
\label{app:proof_monotonic_descent}

We are given the local improvement condition: $\mathcal{S}_i(p_i) \le \mathcal{S}_i(p_{i-1})$.
Let's expand both sides of the inequality:
\begin{align*}
\mathcal{S}_i(p_i) &= \alpha \KL(p_i \Vert p_y) + (1-\alpha) \KL(p_i \Vert p_{i-1}) \\
\mathcal{S}_i(p_{i-1}) &= \alpha \KL(p_{i-1} \Vert p_y) + (1-\alpha) \KL(p_{i-1} \Vert p_{i-1}) = \alpha \KL(p_{i-1} \Vert p_y).
\end{align*}
Substituting these into the condition gives:
\[ \alpha \KL(p_i \Vert p_y) + (1-\alpha) \KL(p_i \Vert p_{i-1}) \le \alpha \KL(p_{i-1} \Vert p_y). \]
Since $\KL(p_i \Vert p_{i-1}) \ge 0$ and $\alpha \in (0,1)$, we can rearrange the inequality to isolate $\KL(p_i \Vert p_y)$:
\[ \alpha \KL(p_i \Vert p_y) \le \alpha \KL(p_{i-1} \Vert p_y) - (1-\alpha) \KL(p_i \Vert p_{i-1}). \]
Dividing by $\alpha > 0$ yields the single-step descent inequality:
\[ \KL(p_i \Vert p_y) \le \KL(p_{i-1} \Vert p_y) - \frac{1-\alpha}{\alpha} \KL(p_i \Vert p_{i-1}). \]
To obtain the telescoping bound for the entire network, we sum this inequality from $i=1$ to $L$:
\begin{align*}
\sum_{i=1}^L \left( \KL(p_i \Vert p_y) - \KL(p_{i-1} \Vert p_y) \right) &\le -\frac{1-\alpha}{\alpha} \sum_{i=1}^L \KL(p_i \Vert p_{i-1}). \\
(\KL(p_L \Vert p_y) - \KL(p_{L-1} \Vert p_y)) + \dots + (\KL(p_1 \Vert p_y) - \KL(p_0 \Vert p_y)) &\le -\frac{1-\alpha}{\alpha} \sum_{i=1}^L \KL(p_i \Vert p_{i-1}).
\end{align*}
The sum on the left is a telescoping series, which simplifies to $\KL(p_L \Vert p_y) - \KL(p_0 \Vert p_y)$.
\[ \KL(p_L \Vert p_y) - \KL(p_0 \Vert p_y) \le -\frac{1-\alpha}{\alpha} \sum_{i=1}^L \KL(p_i \Vert p_{i-1}). \]
Rearranging gives the final bound:
\[ \KL(p_L \Vert p_y) \le \KL(p_0 \Vert p_y) - \frac{1-\alpha}{\alpha} \sum_{i=1}^L \KL(p_i \Vert p_{i-1}). \]
This completes the proof.

\subsection{Proof Sketch for Proposition~\ref{prop:scalability}}
\label{app:proof_scalability}

This proposition follows directly from the definition of the SID and BP training algorithms.
\paragraph{Time Complexity.}
The total time for one BP minibatch is the sum of all forward and backward passes, as they must be executed sequentially: $T_{\mathrm{BP}} = \sum_{i=1}^L (C_f^{(i)} + C_b^{(i)})$.
For SID, the teacher forward pass is sequential: $T_{\text{fwd}} = \sum_{i=1}^L C_f^{(i)}$. The local updates, however, can run in parallel. On a system with $P \ge L$ devices, all $L$ backward passes can execute concurrently. The time for this phase is limited by the slowest module: $T_{\text{bwd}} = \max_i C_b^{(i)}$. The total time is their sum (ignoring communication for aggregation, which is common to both), $T_{\mathrm{SID}} \approx T_{\text{fwd}} + T_{\text{bwd}}$. The speedup comes from replacing the sum of backward costs $\sum C_b^{(i)}$ with the max-cost $\max_i C_b^{(i)}$.

\paragraph{Memory Complexity.}
For BP, to compute the gradient for the first module, all activations from all subsequent modules ($A_1, \dots, A_L$) must be kept in memory. The peak memory is therefore the sum of all activation memory costs, $M_{\mathrm{BP}} \approx \sum_{i=1}^L A_i$.
For SID, when computing the local gradient for module $f_i$ on device $i$, only its own activations $A_i$ need to be stored. The gradient does not propagate to other modules, so their activations can be discarded. Therefore, the peak memory requirement for any single device is simply the memory needed for the largest module it hosts, $\max_i A_i$.


\section{Detailed Experimental Information}

This appendix provides comprehensive details regarding the experimental setup, supplemental results, and analysis methodologies discussed in the main paper. Our goal is to ensure full reproducibility and provide deeper insights into our findings.

\subsection{Experimental Setup Details}
\label{app:setup_details}

\subsubsection{Datasets and Preprocessing}
We used three standard image classification benchmarks. Standard data augmentation techniques were applied during training for all datasets.
\begin{itemize}[leftmargin=*]
    \item \textbf{CIFAR-10 \& CIFAR-100:} For both datasets \citep{Krizhevsky2009LearningML}, we used the standard training set of 50,000 images and a test set of 10,000 images. The images are 32x32 pixels. For data augmentation, we applied random horizontal flips and random 32x32 crops from images zero-padded to 40x40.
    \item \textbf{Tiny-ImageNet:} This dataset \citep{Le2015TinyIV} contains 200 classes from ImageNet, with 100,000 training images, 10,000 validation images, and 10,000 test images. Images are downscaled to 64x64 resolution. For augmentation, we used random horizontal flips and random 64x64 crops from images zero-padded to 72x72.
\end{itemize}
All images were normalized using the per-channel mean and standard deviation computed from their respective training sets.

\subsubsection{Architectures}
\label{app:arch_details}

\paragraph{SimpleCNN Architecture.}
Our primary testbed, the SimpleCNN, was designed with a deep stack of processing modules to rigorously test optimization algorithms. It separates perceptual feature extraction from sequential belief refinement. The architecture is detailed in Table~\ref{tab:simplecnn_arch}.

\begin{table}[h!]
\centering
\caption{The SimpleCNN architecture. The network consists of a shared feature extractor followed by $L$ identical processing modules.}
\label{tab:simplecnn_arch}
\begin{tabular}{lll}
\toprule
\textbf{Component} & \textbf{Layer Type} & \textbf{Output Shape / Dims} \\
\midrule
\multicolumn{3}{l}{\textbf{Shared Feature Extractor ($c$)}} \\
& Conv(3, 32, k=3), ReLU & (B, 32, H, W) \\
& Conv(32, 64, k=3), ReLU, MaxPool(2) & (B, 64, H/2, W/2) \\
& Conv(64, 128, k=3), ReLU & (B, 128, H/2, W/2) \\
& Conv(128, 128, k=3), ReLU, MaxPool(2) & (B, 128, H/4, W/4) \\
& AdaptiveAvgPool(1), Flatten & (B, 128) \\
& Linear(128, 128) & (B, 128) $\rightarrow$ $z$ \\
\midrule
\multicolumn{3}{l}{\textbf{Processing Module ($f_i$), repeated L times}} \\
& Input Concatenation & (B, 128 + num\_classes) -> $\text{cat}(p_{i-1}, z)$ \\
& Linear, ReLU & (B, 256) \\
& Linear & (B, num\_classes) $\rightarrow$ Logits for $p_i$ \\
\bottomrule
\end{tabular}
\end{table}

\paragraph{Standard Architectures.} For the architectural generality experiments, we used standard, off-the-shelf implementations of VGG-11, ResNet-18, and a ViT-Tiny model adapted for CIFAR-scale images.

\subsubsection{Baseline Implementation Details}
\begin{itemize}[leftmargin=*]
    \item \textbf{Backpropagation (BP):} Standard end-to-end training of the entire network with a global cross-entropy loss.
    \item \textbf{Feedback Alignment (FA):} Implemented by replacing the transpose of the weight matrices in the backward pass with fixed random matrices, which are initialized once and remain frozen.
    \item \textbf{NoProp:} A state-of-the-art local learning method where each module is trained independently to denoise a noisy version of the final target, conditioned on the input image, thereby eliminating the need for any inter-module signal propagation during training.
    \item \textbf{Forward-Forward (FF):} We implemented a version where each module is a layer trained to have a higher sum-of-squares activity for ``positive" data (correct label overlaid) than for ``negative" data (incorrect label overlaid), using a margin-based loss.
    \item \textbf{HSIC-based Learning:} Each module is trained to maximize the Hilbert-Schmidt Independence Criterion between its output features and the target labels, using an unbiased estimator with an RBF kernel.
\end{itemize}

\subsubsection{Training Protocol}
To ensure fair comparisons, all methods were trained using the same optimization protocol unless otherwise specified.
\begin{itemize}[leftmargin=*]
    \item \textbf{Optimizer:} Adam \citep{kingma2017adammethodstochasticoptimization}.
    \item \textbf{Learning Rate:} An initial learning rate of $1 \times 10^{-3}$.
    \item \textbf{Learning Rate Schedule:} A cosine annealing schedule over the course of training.
    \item \textbf{Batch Size:} 128 for all datasets.
    \item \textbf{Epochs:} 100 epochs for benchmark comparisons; varied for scalability studies.
    \item \textbf{Loss Function:} Cross-entropy with label smoothing ($\epsilon=0.1$) for all applicable methods.
    \item \textbf{SID Hyperparameter $\alpha$:} Based on a validation sweep (see Figure~\ref{fig:alpha_ablation}), we used a fixed value of $\alpha=0.5$ for all main experiments.
\end{itemize}

\subsubsection{Computing Infrastructure}
All experiments were conducted on a server equipped with NVIDIA A100 GPUs using PyTorch.

\subsection{Supplemental Experimental Results}

\subsubsection{Architectural Generality}
\label{app:generality}
Table~\ref{tab:app_arch_comp} provides the full results for the architectural generality experiment, demonstrating that SID's competitive performance is not limited to the SimpleCNN architecture but extends to modern standard models.

\begin{table}[h!]
\centering
\caption{Full results for Architectural Generality. Accuracy (\%) of SID vs. BP on standard backbones, reported as mean $\pm$ std over three seeds.}
\label{tab:app_arch_comp}
\begin{tabular}{llcc}
\toprule
\textbf{Dataset} & \textbf{Backbone} & \textbf{Backpropagation (BP)} & \textbf{SID (Ours)} \\
\midrule
\multirow{4}{*}{CIFAR-10} & SimpleCNN & $84.31 \pm 0.90$ & $84.37 \pm 0.26$ \\
& VGG-11 & $88.21 \pm 0.15$ & $\mathbf{88.26 \pm 0.18}$ \\
& ResNet-18 & $91.12 \pm 0.21$ & $\mathbf{91.15 \pm 0.25}$ \\
& ViT-Tiny & $92.35 \pm 0.11$ & $\mathbf{93.11 \pm 0.19}$ \\
\midrule
\multirow{3}{*}{CIFAR-100} & SimpleCNN & $52.00 \pm 0.65$ & $\mathbf{54.05 \pm 0.11}$ \\
& VGG-11 & $61.01 \pm 0.24$ & $\mathbf{62.33 \pm 0.31}$ \\
& ResNet-18 & $66.24 \pm 0.28$ & $\mathbf{67.37 \pm 0.33}$ \\
\bottomrule
\end{tabular}
\end{table}

\subsubsection{Ablation Studies}
\label{app:ablations}
Figure~\ref{fig:alpha_ablation} presents the detailed ablation studies for the hyperparameter $\alpha$ and the feature extractor update strategy. The left panel shows that SID is robust to the choice of $\alpha$, achieving high performance for a wide range of values between 0.2 and 0.8. The right panel demonstrates that the default ``all-layer" update strategy for the shared extractor is critical for optimal performance, significantly outperforming variants where the extractor is frozen or updated only by the final layer's loss. This confirms that allowing all modules to cooperatively refine the shared features is a key component of SID's success.

\begin{figure}[h!]
    \centering
    \includegraphics[width=0.55\linewidth]{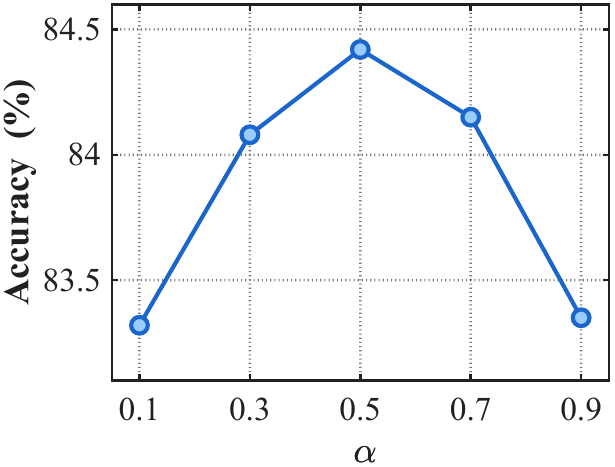}
    \caption{Impact of the consistency parameter $\alpha$ on CIFAR-10 accuracy. 
    Performance is robust, peaking around $\alpha=0.5$.}
    \label{fig:alpha_ablation}
\end{figure}

\begin{table}[h!]
    \centering
    \renewcommand{\arraystretch}{1.2} 
    \begin{tabular}{lcc}
        \toprule
        \textbf{Update Strategy} & \textbf{CIFAR-10 (\%)} & \textbf{CIFAR-100 (\%)} \\
        \cmidrule(r){1-1} \cmidrule(lr){2-2} \cmidrule(l){3-3}
        SID (default)   & \textbf{84.42} & \textbf{54.05} \\
        SID (final\_layer) & 83.85          & 52.15 \\
        SID (frozen)    & 82.90          & 49.80 \\
        \bottomrule
    \end{tabular}
    \caption{Ablation on the feature extractor update strategy. 
    Updating with gradients from all modules (default) is critical for performance.}
    \label{tab:extractor_ablation}
\end{table}

\subsubsection{Additional Visualizations for Internal Mechanisms}

\paragraph{Shared Feature Space Visualization.}
Figure~\ref{fig:tsne_features} provides a t-SNE visualization of the 128-dimensional shared feature space learned by the extractor $c(x)$ for both SID and BP on the CIFAR-10 test set. In both cases, the extractor learns a high-quality embedding where classes form largely separable clusters. This confirms that the aggregated gradient signal from all local modules in SID is sufficient to train a powerful and discriminative shared representation, comparable to that learned with a single global loss signal in BP.

\begin{figure}[h!]
    \centering
    \includegraphics[width=0.9\textwidth]{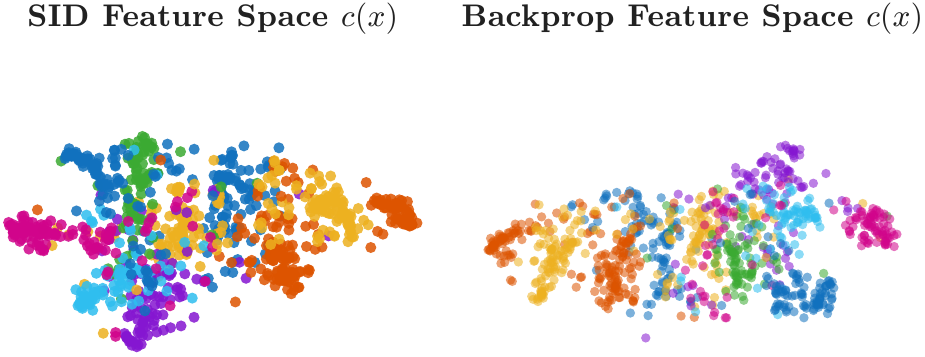}
    \caption{t-SNE visualization of the shared feature space $c(x)$ learned by SID (left) and BP (right). Both methods learn a well-structured embedding.}
    \label{fig:tsne_features}
\end{figure}

\paragraph{Detailed Belief Evolution Trajectories.}
Figure~\ref{fig:full_belief_evolution} provides additional examples of the layer-wise belief evolution for the true class, supplementing the discussion in the main paper. The plots consistently show that on challenging examples (where BP makes an incorrect final prediction, or its confidence dips significantly), SID's belief trajectory (blue solid line) is more stable and monotonically non-decreasing. This illustrates the regularizing effect of the consistency term, which prevents drastic changes in belief and encourages a more ``deliberative" refinement process.

\begin{figure}[h!]
    \centering
    \includegraphics[width=\textwidth]{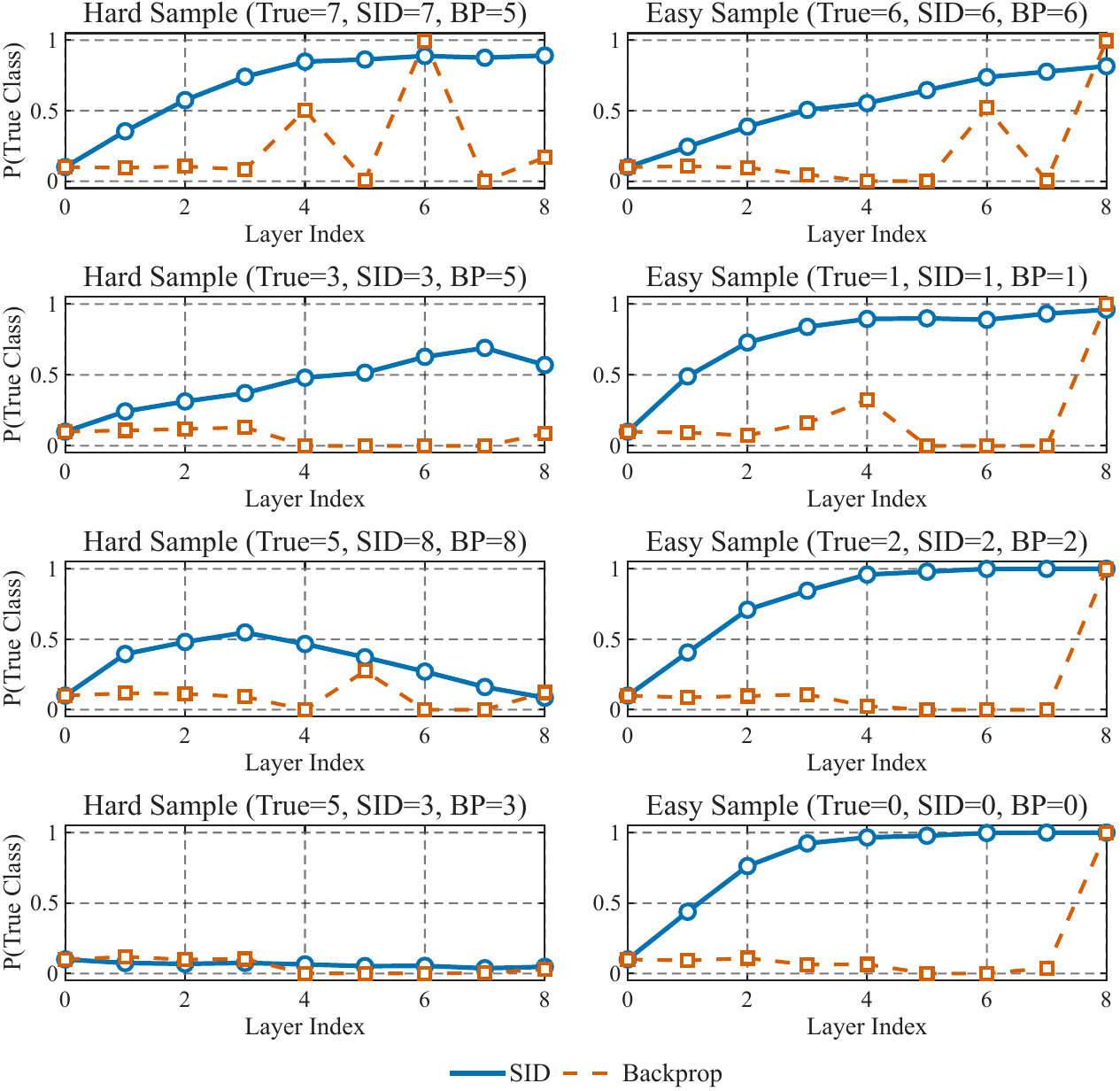}
    \caption{Full belief evolution trajectories for multiple hard (left columns) and easy (right columns) examples from the CIFAR-10 test set. SID's belief (blue) consistently shows a more stable monotonic progression compared to the potentially volatile trajectory of BP (orange).}
    \label{fig:full_belief_evolution}
\end{figure}

\subsection{Computational Performance Analysis Methodology}
\label{app:profiling_methodology}
This section details the methodology used for the computational profiling experiment designed to quantify the impact of update locking (Figure~\ref{fig:projected_speedup} in the main text).

\paragraph{Profiling on a Single GPU.}
Since direct measurement on a multi-GPU system can be confounded by system-specific communication overheads, we opted for a more controlled approach by profiling the computational components on a single GPU. This allows us to isolate the algorithmic structure as the primary variable. We used `torch.cuda.Event` for high-precision timing of GPU operations, which avoids synchronization issues.

\paragraph{Decomposition of Training Steps.}
We decomposed a single minibatch update into its fundamental computational stages for both algorithms:
\begin{itemize}[leftmargin=*]
    \item \textbf{For Backpropagation (BP)}, the process is inherently sequential. We measured:
    \begin{enumerate}
        \item $T_{BP\_fwd}$: The time for the complete forward pass, from input to loss calculation.
        \item $T_{BP\_bwd}$: The time for the complete backward pass (`loss.backward()`).
    \end{enumerate}
    \item \textbf{For SID}, the process has sequential and parallelizable stages. We measured:
    \begin{enumerate}
        \item $T_{SID\_fwd\_teacher}$: The time for the sequential gradient-free teacher forward pass.
        \item $T_{SID\_fwd\_grad}$: The time to recompute the shared features with gradients enabled.
        \item $\{T_{SID\_bwd\_i}\}_{i=1}^L$: The individual backward pass time for \emph{each} of the $L$ modules, measured in isolation.
    \end{enumerate}
\end{itemize}

\paragraph{Theoretical Performance Projection Model.}
We projected the total execution time on a system with $P$ GPUs under an ideal model parallelism setup (modules are evenly distributed, communication overhead is neglected to isolate the update locking effect).
\begin{itemize}[leftmargin=*]
    \item The projected time for BP, dominated by its sequential critical path, is: $T_{BP}(P) = T_{BP\_fwd} + T_{BP\_bwd}$.
    \item The projected time for SID is the sum of its sequential parts and the duration of its parallel phase: $T_{SID}(P) = T_{SID\_fwd\_teacher} + T_{SID\_fwd\_grad} + T_{parallel\_bwd}(P)$. The parallel backward time is determined by the busiest processor: $T_{parallel\_bwd}(P) = \max_{j=1 \dots P} (\sum_{i \in \text{Modules on GPU}_j} T_{SID\_bwd\_i})$.
\end{itemize}
The speedup is then calculated as $Speedup(P) = T_{BP}(P) / T_{SID}(P)$. This model directly quantifies the architectural advantage of SID's parallelizable design.

\appendix

\section{Theoretical Analysis under Imperfect Optimization (\texorpdfstring{$\epsilon$}{epsilon}-Robustness)}
\label{app:imperfect_opt}

Proposition 2 in the main paper establishes a monotonic descent guarantee under the ideal condition that each module improves its local objective, i.e., $S_i(p_i) \le S_i(p_{i-1})$. In practice, due to stochastic gradient descent and finite model capacity, modules may only find an approximate minimum. This section extends our analysis to this more realistic scenario.

We introduce the concept of an \textbf{$\epsilon$-improvement step}, where each module $f_i$ produces an output belief $p_i$ that satisfies:
\begin{equation}
    S_i(p_i) \le S_i(p_{i-1}) + \epsilon_i,
\end{equation}
where $\epsilon_i \ge 0$ is the optimization error for module $i$. This error can arise from sources like gradient noise or terminating the optimization process early.

\begin{proposition}[Monotonic Descent with Bounded Error]
\label{prop:epsilon_robust}
Suppose that each module's output $p_i$ satisfies the $\epsilon$-improvement condition $S_i(p_i) \le S_i(p_{i-1}) + \epsilon_i$. Then, the KL divergence of the final belief $p_L$ to the target distribution $p_y$ is bounded as follows:
\begin{equation}
    D_{\mathrm{KL}}(p_L \| p_y) \le D_{\mathrm{KL}}(p_0 \| p_y) - \frac{1-\alpha}{\alpha} \sum_{i=1}^L D_{\mathrm{KL}}(p_i \| p_{i-1}) + \frac{1}{\alpha} \sum_{i=1}^L \epsilon_i.
\end{equation}
This implies a simpler, more direct upper bound on the final performance degradation:
\begin{equation}
    D_{\mathrm{KL}}(p_L \| p_y) \le D_{\mathrm{KL}}(p_0 \| p_y) + \frac{1}{\alpha} \sum_{i=1}^L \epsilon_i.
\end{equation}
\end{proposition}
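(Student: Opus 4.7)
My plan is to mirror the telescoping argument from the proof of Proposition 2 (Appendix A.2), carrying the per-module slack terms $\epsilon_i$ through the algebra. The structural identity that makes Proposition 2 work—namely that $\KL(p_{i-1}\Vert p_{i-1})=0$, so that $\mathcal{S}_i(p_{i-1}) = \alpha\,\KL(p_{i-1}\Vert p_y)$—still holds here, and the $\epsilon_i$ enters only as an additive constant on the right-hand side of the improvement inequality. So the whole proof reduces to a careful bookkeeping exercise.

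Concretely, I would first expand both sides of $\mathcal{S}_i(p_i) \le \mathcal{S}_i(p_{i-1}) + \epsilon_i$ using the definition of $\mathcal{S}_i$, obtaining
$$\alpha\,\KL(p_i\Vert p_y) + (1-\alpha)\,\KL(p_i\Vert p_{i-1}) \;\le\; \alpha\,\KL(p_{i-1}\Vert p_y) + \epsilon_i.$$
Then I would isolate $\KL(p_i\Vert p_y)$ on the left by moving the consistency term to the right and dividing through by $\alpha > 0$, yielding the single-step bound
$$\KL(p_i\Vert p_y) \;\le\; \KL(p_{i-1}\Vert p_y) \;-\; \frac{1-\alpha}{\alpha}\,\KL(p_i\Vert p_{i-1}) \;+\; \frac{\epsilon_i}{\alpha}.$$
Summing this inequality over $i=1,\dots,L$ causes the $\KL(p_i\Vert p_y)$ terms to telescope, leaving $\KL(p_L\Vert p_y) - \KL(p_0\Vert p_y)$ on the left, which rearranges into the first bound in the statement.

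For the second, weaker bound, I would simply observe that $\alpha \in (0,1)$ makes the coefficient $\tfrac{1-\alpha}{\alpha}$ strictly positive, and each $\KL(p_i\Vert p_{i-1})\ge 0$ by Gibbs' inequality, so the entire sum $-\tfrac{1-\alpha}{\alpha}\sum_i \KL(p_i\Vert p_{i-1})$ is non-positive and can be dropped to yield a valid upper bound.

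Honestly, I do not anticipate a genuine obstacle in this proof; it is a direct $\epsilon$-slack extension of Proposition 2 and the algebra is linear throughout. The only subtlety worth explicitly verifying is that the direction of all inequalities is preserved by the division by $\alpha$ and by dropping the non-positive term—both of which rely only on $\alpha\in(0,1)$, an assumption already in force throughout the paper. No additional regularity, convexity, or distributional-support assumptions beyond those already used for Proposition 2 are needed.
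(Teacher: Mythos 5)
Your proposal is correct and follows essentially the same route as the paper's own proof: expand the $\epsilon$-improvement condition using $\mathcal{S}_i(p_{i-1}) = \alpha\,\KL(p_{i-1}\Vert p_y)$, divide by $\alpha$, telescope the sum, and drop the non-positive consistency sum for the weaker bound. No gaps.
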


\begin{proof}
We start from the proof of Proposition 2 in Appendix A.2. The $\epsilon$-improvement condition is:
\[
\alpha D_{\mathrm{KL}}(p_i \| p_y) + (1-\alpha) D_{\mathrm{KL}}(p_i \| p_{i-1}) \le \alpha D_{\mathrm{KL}}(p_{i-1} \| p_y) + \epsilon_i.
\]
Rearranging to isolate $D_{\mathrm{KL}}(p_i \| p_y)$:
\[
\alpha D_{\mathrm{KL}}(p_i \| p_y) \le \alpha D_{\mathrm{KL}}(p_{i-1} \| p_y) - (1-\alpha) D_{\mathrm{KL}}(p_i \| p_{i-1}) + \epsilon_i.
\]
Dividing by $\alpha > 0$:
\[
D_{\mathrm{KL}}(p_i \| p_y) \le D_{\mathrm{KL}}(p_{i-1} \| p_y) - \frac{1-\alpha}{\alpha} D_{\mathrm{KL}}(p_i \| p_{i-1}) + \frac{\epsilon_i}{\alpha}.
\]
Summing this inequality from $i=1$ to $L$ yields a telescoping series on the left-hand side:
\[
\sum_{i=1}^L (D_{\mathrm{KL}}(p_i \| p_y) - D_{\mathrm{KL}}(p_{i-1} \| p_y)) \le \sum_{i=1}^L \left( - \frac{1-\alpha}{\alpha} D_{\mathrm{KL}}(p_i \| p_{i-1}) + \frac{\epsilon_i}{\alpha} \right).
\]
\[
D_{\mathrm{KL}}(p_L \| p_y) - D_{\mathrm{KL}}(p_0 \| p_y) \le - \frac{1-\alpha}{\alpha} \sum_{i=1}^L D_{\mathrm{KL}}(p_i \| p_{i-1}) + \frac{1}{\alpha} \sum_{i=1}^L \epsilon_i.
\]
Rearranging gives the first result. Since $D_{\mathrm{KL}}(p_i \| p_{i-1}) \ge 0$, we can drop the negative term to obtain the simpler upper bound:
\[
D_{\mathrm{KL}}(p_L \| p_y) \le D_{\mathrm{KL}}(p_0 \| p_y) + \frac{1}{\alpha} \sum_{i=1}^L \epsilon_i.
\]
This completes the proof.
\end{proof}
\textbf{Interpretation}: Proposition \ref{prop:epsilon_robust} provides a crucial robustness guarantee. It shows that the final prediction error is bounded by the initial error (from a uniform belief) plus the accumulated optimization errors from all modules, scaled by $1/\alpha$. As long as the local optimization is reasonably effective (i.e., $\epsilon_i$ are small), the overall learning process remains stable and does not diverge. This formalizes the intuition that SID is resilient to the imperfections inherent in practical training.

\section{Analysis of Stale Teacher Beliefs}
\label{app:stale_teacher}

The two-phase design of SID (Algorithm 1 and 2) involves generating teacher beliefs $p_{i-1}^{\text{teacher}}$ using model parameters $\theta^{(t)}$ and then using these fixed teachers to compute gradients for updating to $\theta^{(t+1)}$. In asynchronous or parallel settings, these teachers might become ``stale", meaning they were generated with slightly older parameters $\theta^{(t-k)}$ while the update happens on $\theta^{(t)}$. We analyze the impact of this staleness on the gradient computation.

Let $\theta$ denote the parameters of a module $f_i$, and let $p_{i-1} = f_{i-1}(\dots; \theta_{i-1}^{\text{old}})$ be the stale teacher belief generated with old parameters. The ``fresh" teacher would be $p'_{i-1} = f_{i-1}(\dots; \theta_{i-1}^{\text{new}})$. The local loss for module $i$ is computed using the stale teacher:
\[
\mathcal{L}_i(\theta_i) = \alpha D_{\mathrm{KL}}(f_i(\dots) \| p_y) + (1-\alpha) D_{\mathrm{KL}}(f_i(\dots) \| \text{sg}(p_{i-1})).
\]
The true gradient should use $p'_{i-1}$. Let $g_i = \nabla_{\theta_i} \mathcal{L}_i$ be the computed gradient and $g'_i$ be the true gradient. The error is in the consistency term's gradient.

\begin{proposition}[Gradient Error Bound from Staleness]
Assume the module function $f_i$ is $L_f$-Lipschitz continuous with respect to its parameters, and the gradient of the KL divergence with respect to its second argument is $L_{\mathrm{KL}}$-Lipschitz continuous in the relevant domain. The error in the computed gradient for module $i$ is bounded by:
\begin{equation}
    \|g_i - g'_i\| \le C \cdot \|\theta_{i-1}^{\text{new}} - \theta_{i-1}^{\text{old}}\|,
\end{equation}
for some constant $C$ that depends on $\alpha$, $L_f$, and $L_{\mathrm{KL}}$.
\end{proposition}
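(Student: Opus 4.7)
The plan is to localize the staleness error to a single term of the loss and then apply the two Lipschitz hypotheses in succession. First I would observe that the distillation term $\alpha D_{\mathrm{KL}}(f_i(\ldots;\theta_i)\|p_y)$ does not involve the teacher at all, so after forming $g_i - g'_i$ the distillation gradients cancel and the entire error lies in the consistency contribution:
\[
g_i - g'_i = (1-\alpha)\Bigl(\nabla_{\theta_i} D_{\mathrm{KL}}\bigl(f_i(\ldots;\theta_i)\,\big\|\,p_{i-1}\bigr) - \nabla_{\theta_i} D_{\mathrm{KL}}\bigl(f_i(\ldots;\theta_i)\,\big\|\,p'_{i-1}\bigr)\Bigr).
\]
Because the stop-gradient operator treats the teacher as a constant during differentiation, the teacher enters only as a passive argument, which is exactly the mode of dependence that the second Lipschitz hypothesis controls.

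Second, I would introduce the map $\Phi(q) := \nabla_{\theta_i} D_{\mathrm{KL}}(f_i(\ldots;\theta_i)\,\|\,q)$ and invoke the second hypothesis to get $\|\Phi(p_{i-1}) - \Phi(p'_{i-1})\| \le L_{\mathrm{KL}}\,\|p_{i-1} - p'_{i-1}\|$. Then I would translate the distribution-space bound into a parameter-space bound by applying the first hypothesis to module $f_{i-1}$, with its input (the upstream teacher belief and the shared features $c(x)$) held fixed: $\|p_{i-1} - p'_{i-1}\| \le L_f\,\|\theta_{i-1}^{\mathrm{new}} - \theta_{i-1}^{\mathrm{old}}\|$. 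Composing these two inequalities and folding in the $(1-\alpha)$ prefactor yields the claimed bound with $C = (1-\alpha)\,L_f\,L_{\mathrm{KL}}$.

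The main obstacle I anticipate is justifying the Lipschitz constant $L_{\mathrm{KL}}$ on the relevant domain. The gradient $\nabla_{\theta_i} D_{\mathrm{KL}}(p\|q)$ contains a term of the form $-\sum_k (\partial p(k)/\partial \theta_i)\,\log q(k)$, whose dependence on $q$ is only locally Lipschitz and blows up near the boundary of the simplex. This is precisely why Appendix~\ref{app:proofs} already adopts the full-support convention via label smoothing; under that convention $q(k)\ge\delta$ for some $\delta>0$, and one may take $L_{\mathrm{KL}}$ proportional to $1/\delta$ times a uniform bound on $\|\nabla_{\theta_i} f_i(\ldots;\theta_i)\|_{\mathrm{op}}$. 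A secondary subtlety is that in a pipelined schedule, staleness in $f_{i-1}$ may itself be inherited from staleness in $f_{i-2}, f_{i-3},\ldots$; I would keep the proposition in its stated single-step form and flag the multi-step extension as a routine induction whose constant picks up an additional factor of $L_f$ per stale upstream stage.
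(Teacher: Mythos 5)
Your proof follows essentially the same route as the paper's sketch: the distillation term cancels, leaving the consistency-term gradient difference, which is bounded via the $L_{\mathrm{KL}}$-Lipschitz hypothesis on the KL gradient in its second argument and then via the $L_f$-Lipschitz continuity of the predecessor module $f_{i-1}$, giving $C=(1-\alpha)L_f L_{\mathrm{KL}}$. Your added remarks on the full-support/label-smoothing convention needed for $L_{\mathrm{KL}}$ to exist, and on compounded staleness, are sensible refinements of the same argument rather than a different approach.
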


\begin{proof}[Proof Sketch]
The difference between the gradients is:
\[
\|g_i - g'_i\| = (1-\alpha) \|\nabla_{\theta_i} D_{\mathrm{KL}}(f_i(\dots) \| p_{i-1}) - \nabla_{\theta_i} D_{\mathrm{KL}}(f_i(\dots) \| p'_{i-1})\|.
\]
Using the chain rule and Lipschitz continuity assumptions, this difference can be bounded by the difference in the teacher beliefs, $\|p_{i-1} - p'_{i-1}\|$.
This belief difference is, in turn, bounded by the Lipschitz continuity of the predecessor module $f_{i-1}$:
\[
\|p_{i-1} - p'_{i-1}\| = \|f_{i-1}(\dots; \theta_{i-1}^{\text{old}}) - f_{i-1}(\dots; \theta_{i-1}^{\text{new}})\| \le L_f \cdot \|\theta_{i-1}^{\text{old}} - \theta_{i-1}^{\text{new}}\|.
\]
Combining these bounds yields the result.
\end{proof}

\textbf{Interpretation}: The gradient error is directly proportional to the magnitude of parameter change between the time of teacher generation and its use. In standard training with small learning rates, $\|\theta^{\text{new}} - \theta^{\text{old}}\|$ is small, ensuring the gradient error is minimal. This analysis provides confidence that the two-phase approach is stable and the use of cached teachers is a valid and robust approximation.

\section{Additional Ablation Studies and Robustness Analysis}
\label{app:more_ablations}

To further assess the robustness of SID, we evaluate its performance under symmetric label noise, a challenging setting where a fraction of training labels are incorrect.

\subsection{Robustness to Label Noise}
\textbf{Experimental Setup}: We used the CIFAR-10 dataset and introduced symmetric label noise by randomly re-assigning the label of a certain percentage of training examples to one of the other 9 classes. We compared the final test accuracy of SID and standard BP after 100 epochs.

\begin{table}[h]
\centering
\caption{Test accuracy (\%) on CIFAR-10 with varying levels of symmetric label noise. SID demonstrates significantly higher robustness compared to standard backpropagation.}
\label{tab:label_noise}
\begin{tabular}{@{}lccc@{}}
\toprule
Noise Level & 0\% (Original) & 20\% & 40\% \\
\midrule
Backpropagation (BP) & 84.32 $\pm$ 0.45 & 75.11 $\pm$ 0.62 & 58.24 $\pm$ 0.88 \\
SID (Ours) & \textbf{84.42 $\pm$ 0.35} & \textbf{78.53 $\pm$ 0.41} & \textbf{65.91 $\pm$ 0.53} \\
\midrule
Performance Gap ($\Delta$) & +0.10 & +3.42 & +7.67 \\
\bottomrule
\end{tabular}
\end{table}

\textbf{Analysis}: As shown in Table \ref{tab:label_noise}, while both methods perform similarly on the clean dataset, SID's performance advantage grows substantially as the level of label noise increases. We attribute this enhanced robustness to the local consistency term, $(1-\alpha)D_{\mathrm{KL}}(p_i \| \text{sg}(p_{i-1}))$. This term acts as a regularizer, forcing each module's update to be an incremental refinement of the previous module's belief. When the supervisory signal from the ground-truth label $p_y$ is noisy (incorrect), the consistency term provides a stable regularizing signal based on the consensus of earlier modules, preventing the model from aggressively overfitting to the incorrect label. BP, lacking such an explicit layer-wise regularizer, is more susceptible to fitting the noisy data.
\end{document}